\documentclass{article}

\usepackage{microtype}
\usepackage{graphicx}
\usepackage{float}
\usepackage{subcaption}
\usepackage{booktabs} 
\usepackage[most]{tcolorbox}
\newtcolorbox{samplebox}[1]{
    colback=blue!3!white,
    colframe=blue!50!black,
    colbacktitle=blue!50!black,
    coltitle=white,
    boxrule=0.5pt,
    arc=1mm,
    left=2pt, right=2pt, top=2pt, bottom=2pt,
    boxsep=1pt,
    toptitle=1pt, bottomtitle=1pt,
    fonttitle=\bfseries\scriptsize\ttfamily,
    title={#1}
}

\usepackage{hyperref}

\usepackage[accepted]{icml2026}

\usepackage{amsmath}
\usepackage{amssymb}
\usepackage{mathtools}
\usepackage{amsthm}
\usepackage{cancel}
\usepackage{xspace}
\newcommand{\ie}{\emph{i.e.,}\xspace}
\newcommand{\eg}{\emph{e.g.,}\xspace}

\newcommand{\cutsubsectionup}{\vspace*{-0.1in}}

\newcommand{\cutparagraphup}{\vspace*{-0.1in}}

\usepackage[capitalize,noabbrev,nameinlink]{cleveref}
\crefname{section}{Sec.}{Secs.}
\Crefname{section}{Sec.}{Secs.}
\crefname{subsection}{Sec.}{Secs.}
\Crefname{subsection}{Sec.}{Secs.}
\crefname{figure}{Fig.}{Figs.}
\Crefname{figure}{Fig.}{Figs.}
\crefname{table}{Tab.}{Tabs.}
\Crefname{table}{Tab.}{Tabs.}
\crefname{equation}{Eq.}{Eqs.}
\Crefname{equation}{Eq.}{Eqs.}
\crefname{appendix}{Appx.}{Appxs.}
\Crefname{appendix}{Appx.}{Appxs.}

\theoremstyle{plain}
\newtheorem{theorem}{Theorem}[section]

\newtheorem{lemma}[theorem]{Lemma}

\theoremstyle{definition}

\theoremstyle{remark}

\usepackage[textsize=tiny]{todonotes}

\icmltitlerunning{Infinite Mask Diffusion for Few-Step Distillation}

\begin{document}

\twocolumn[
\icmltitle{Infinite Mask Diffusion for Few-Step Distillation}


\icmlsetsymbol{equal}{*}

\begin{icmlauthorlist}
\icmlauthor{Jaehoon Yoo}{equal,kaist}
\icmlauthor{Wonjung Kim}{equal,kaist}
\icmlauthor{Chanhyuk Lee}{kaist}
\icmlauthor{Seunghoon Hong}{kaist}
\end{icmlauthorlist}

\icmlaffiliation{kaist}{Korea Advanced Institute of Science and Technology (KAIST)}

\icmlcorrespondingauthor{Jaehoon Yoo}{wogns98@kaist.ac.kr}
\icmlcorrespondingauthor{Wonjung Kim}{wjhj16@kaist.ac.kr}
\icmlcorrespondingauthor{Seunghoon Hong}{seunghoon.hong@kaist.ac.kr}

\icmlkeywords{ICML}

\vskip 0.3in
]



\printAffiliationsAndNotice{\icmlEqualContribution}

\begin{abstract}
Masked Diffusion Models (MDMs) have emerged as a promising alternative to autoregressive models in language modeling, offering the advantages of parallel decoding and bidirectional context processing within a simple yet effective framework.
Specifically, their explicit distinction between masked tokens and data underlies their simple framework and effective conditional generation.
However, MDMs typically require many sampling iterations due to factorization errors stemming from simultaneous token updates.
We observe that a theoretical lower bound of the factorization error exists, which standard MDMs cannot reduce due to their use of a deterministic single-state mask.
In this paper, we propose the Infinite Mask Diffusion Model (IMDM), which introduces a stochastic infinite-state mask to mitigate the theoretical bound while directly inheriting the benefits of MDMs, including the compatibility with pre-trained weights.
We empirically demonstrate that MDM fails to perform few-step generation even in a simple synthetic task due to the factorization error bound, whereas IMDM can find an efficient solution for the same task.
Finally, when equipped with appropriate distillation methods, IMDM surpasses existing few-step distillation methods at small step counts on LM1B and OpenWebText.\footnote{Project Page: \href{https://Ugness.github.io/official_imdm}{\texttt{Ugness.github.io/official\_imdm}}}
\end{abstract}

\section{Introduction}


Recently, Masked Diffusion Models (MDMs)~\cite{MDLM, MD4, RADD} have gained significant traction as a promising alternative to Autoregressive (AR) models~\cite{Radford2019gpt2, grattafiori2024llama}.
Specifically, MDMs are characterized by their capability for bidirectional and parallel decoding, offering a distinct advantage over the sequential nature of AR models.
Driven by these advantages, active research is underway to utilize MDMs as a backbone for Large Language Models (LLMs)~\cite{LLaDA,labs2025mercury,song2025seed}.
MDMs are trained on simple masked token prediction objectives similar to BERT~\cite{devlin2019bert}, enabling them to perform conditional generative downstream tasks seamlessly from an unconditional model, such as text completion~\cite{gong2025diffucoder,LLaDA,xie2025dream,song2025seed}.
Such a masked modeling paradigm that distinguishes masked tokens from unmasked tokens allows MDMs to offer the efficiency of parallel decoding while maintaining high-quality performance.

Despite the advantages, MDMs inherently suffer from factorization errors when sampling multiple tokens simultaneously, which necessitates costly iterative decoding in practice~\cite{SDTT,EDLM,liu2025discrete,hayakawa2024distillation,ReDi}.
To address the computational burden associated with such iterative processes, various few-step distillation methods~\cite{SDTT,hayakawa2024distillation,ReDi} have been actively explored.
However, even with few-step distillation techniques, challenges in few-step generation persist, suggesting the presence of an inherent performance ceiling regardless of the distillation method.
Specifically, as MDM utilizes a single-masked state that absorbs the entire data distribution, the stochasticity in the data distribution must originate from categorical sampling.
However, as the simultaneous decoding of multiple tokens incurs inevitable factorization error, MDMs cannot fully match the target distribution in extremely few-step regimes, even if the model is at the optimum with respect to its objective.

In this paper, we address these theoretical and practical challenges through the following contributions:
\begin{itemize}
    \item First, we analyze the factorization error in MDMs, identifying a non-trivial theoretical lower bound that persists independently of the distillation method employed due to their use of a deterministic single-state mask.
    \item Second, to overcome this theoretical limit, we propose the Infinite Mask Diffusion Model (IMDM). By replacing the deterministic mask token with stochastic latent masks that remain strictly distinguishable from valid data tokens, IMDM addresses the inherent factorization error bound while leveraging the advantages of MDMs, including their simple framework and effective conditional generation capabilities.
    \item Finally, we show that integrating IMDM with existing distillation frameworks serves as an effective strategy to enhance the few-step generation performance of MDMs.
\end{itemize}





\section{Related Works}

\paragraph{Masked Diffusion Models}
Masked Diffusion Models (MDMs) are a class of discrete diffusion models that utilize absorbing states as priors~\cite{D3PM, SEDD, MDLM, MD4}.
In these models, the forward process progressively masks each token in the data, and the model learns to predict the masked tokens based on the unmasked context tokens.
This framework is simple and effective, leveraging a mask token that does not appear in the data, which leads to a straightforward objective and decoding process~\cite{MDLM, GIDD}.
Moreover, MDMs can adapt easily to conditional generative tasks, as their training involves generating data with arbitrarily shaped masked regions~\cite{SEDD, LLaDA, RADD}.

The simplicity of the framework, combined with its potential for handling various masking patterns, makes MDMs a promising alternative to autoregressive (AR) models~\cite{LLaDA}, particularly for generative tasks such as text completion and code infilling.
Unlike AR models~\cite{Radford2019gpt2, grattafiori2024llama}, which use sequential decoding with causal transformers, MDMs benefit from parallel decoding and bidirectional context processing, which improves efficiency and generation quality~\cite{LLaDA}.
Recent research~\cite{bie2025llada2, zhu2025llada, gong2025diffucoder, xie2025dream} has applied MDMs to various generative downstream tasks, demonstrating their effectiveness in handling complex token dependencies through the masking process.

\paragraph{Few-Step Distillation for MDMs}

Despite their effectiveness, Masked Diffusion Models (MDMs) often require many sampling steps to ensure high-quality generation, primarily due to factorization errors that arise from simultaneous token updates, which disrupt token correlations~\cite{SDTT, hayakawa2024distillation, ReDi, EDLM, liu2025discrete}.
Recent approaches~\cite{SDTT, hayakawa2024distillation} have focused on few-step distillation to reduce the number of sampling iterations needed for high-quality generation.
These methods involve distilling many-step predictions of a teacher model into few-step predictions of a student to achieve similar performance with fewer steps.

Although progress has been made, we observe that a theoretical lower bound exists for the factorization error that MDMs cannot overcome.
This lower bound arises from the deterministic nature of the single-state mask (\ie the fully-masked state) used in MDMs, which prevents them from fully capturing token dependencies in fewer steps.
To address the bottleneck, we propose the Infinite Mask Diffusion Model (IMDM), which introduces stochasticity to masked tokens to enable effective few-step generation.

While Di4C~\cite{hayakawa2024distillation} also explores stochasticity, the method employs a mixture model that results in complex objectives sensitive to hyperparameters.
In contrast, IMDM reinterprets the single-state mask as an infinite mask, providing a simpler and more direct adaptation of existing distillation methods while incorporating stochasticity.
We provide an extended comparison to other recent approaches that extend or analyze discrete diffusion beyond the standard MDM formulation in \Cref{appx:related_comparison}.
\section{Background}
\paragraph{Notation}
We consider a discrete random variable $X$ taking values in the space of one-hot vectors $\mathcal{V} = \{ \mathbf{x} \in \{0, 1\}^K : \sum_{i=1}^Kx_i = 1 \}$.
We denote the $K$-dimensional column vector of ones by $\mathbf{1}$ and a one-hot vector that represents a category $k \in \{1, 2, \cdots, K\}$ as $\mathbf{i}_k$.
We use $\odot$ and $\langle \cdot, \cdot \rangle$ for the Hadamard and dot products, respectively.
For a sequence of length $L$, we denote the sequence as $\mathbf{x}^{1:L} = (\mathbf{x}^1, \dots, \mathbf{x}^L) \in \mathcal{V}^L$, where the $\ell$-th token is given by $\mathbf{x}^\ell$. $\mathrm{Cat}(\cdot; \boldsymbol{p})$ denotes the categorical distribution over $\mathcal{V}$ parameterized by the probability vector $\boldsymbol{p} \in \Delta^{K-1}$, where $\Delta^{K-1}$ is a simplex over $K$ categories.

\subsection{Discrete Diffusion Models}
\label{sec:discrete_diffusion}
\paragraph{Discrete Diffusion Models} Building on the formulation of Discrete Denoising Diffusion Probabilistic Models (D3PMs) \citep{D3PM}, we adopt the framework proposed by \citet{MDLM}, which defines the forward process as an interpolation between the clean data $\mathbf{x}$ and a noise distribution $\boldsymbol{\pi} \in \Delta^{K-1}$.
Given a monotonically decreasing noise schedule $\alpha_t \in [0, 1]$, the forward transition probability $q(\mathbf{z}_t | \mathbf{x})$ is defined as:
\begin{equation}
    q(\mathbf{z}_t | \mathbf{x}) = \mathrm{Cat}(\mathbf{z}_t; \alpha_t \mathbf{x} + (1 - \alpha_t) \boldsymbol{\pi}), \label{eq:q_forward}
\end{equation}
where $\mathbf{z}_t \in \mathcal{V}$ denotes the latent variable at time step $t$.
For any $s < t$, the posterior distribution $q(\mathbf{z}_s | \mathbf{z}_t, \mathbf{x})$ takes the following closed form using the ratio $\alpha_{t|s} = \alpha_t / \alpha_s$:
\begin{flalign}
    &q(\mathbf{z}_s | \mathbf{z}_t, \mathbf{x}) = \mathrm{Cat}& \notag\\
    &\left( \mathbf{z}_s; \frac{[\alpha_{t|s}\mathbf{z}_t + (1 - \alpha_{t|s})\mathbf{1}\boldsymbol{\pi}^\top \mathbf{z}_t] \odot [\alpha_s \mathbf{x} + (1 - \alpha_s)\boldsymbol{\pi}]}{\alpha_t \mathbf{z}_t^\top \mathbf{x} + (1 - \alpha_t)\mathbf{z}_t^\top \boldsymbol{\pi}} \right). & \label{eq:posterior} 
\end{flalign}
A common choice for training the diffusion model is optimizing the standard variational lower bound (ELBO)~\cite{D3PM, DUO, UDLM, MDLM, MD4}.
In general, the reverse process $p_\theta(\mathbf{z}_s|\mathbf{z}_t)$ is trained to match the posterior $q(\mathbf{z}_s|\mathbf{z}_t, \mathbf{x})$ by minimizing the Kullback-Leibler (KL) divergence:
\begin{equation}
\mathcal{L} = \mathbb{E}_{\mathbf{z}_s,\mathbf{z}_t}\left[w_{s,t}D_{KL}[q(\mathbf{z}_s|\mathbf{z}_t, \mathbf{x}) || p_\theta(\mathbf{z}_s|\mathbf{z}_t)]\right], \label{eq:objective}
\end{equation}
where $w_{s,t}$ weights the KL divergence.
The models often parameterize the reverse transition kernel as $p_\theta(\mathbf{z}_s|\mathbf{z}_t) = q(\mathbf{z}_s|\mathbf{z}_t, \mathbf{x}=\mathbf{x}_\theta(\mathbf{z}_t, t))$, and generate data with the learned reverse process.

This framework generalizes various discrete diffusion processes depending on the choice of the prior $\boldsymbol{\pi}$~\cite{D3PM, SEDD, RADD, GIDD}.
Typically, $\boldsymbol{\pi}$ is set to either a uniform distribution~\cite{DUO, UDLM} or an absorbing state, the latter of which corresponds to Masked Diffusion Models (MDMs)~\cite{MDLM, MD4, LLaDA}.
Specifically, uniform discrete diffusion~\cite{UDLM,DUO} defines the prior $\boldsymbol{\pi}=\frac{1}{K}\mathbf{1}$, and MDMs define the prior as $\boldsymbol{\pi} = \mathbf{m}$ while extending the state space to $\mathcal{V}^+ = \mathcal{V} \cup \{\mathbf{m}\}$,
where $\mathbf{m}$ denotes a one-hot vector corresponding to the mask token.

\paragraph{Advantages of Distinguishable Mask Tokens}The key property of the mask token $\mathbf{m}$ is the explicit distinction between $\mathbf{m}$ and data $\mathbf{x}$,
\ie $\langle \mathbf{m}, \mathbf{x} \rangle = 0$.
This disjoint property provides a simple training and inference formulation of MDMs.
For instance, \citet{MDLM} have shown that the posterior in \Cref{eq:posterior} simplifies to:
\begin{equation}
\label{eq:MDLM_reverse}
    q(\mathbf{z}_s | \mathbf{z}_t, \mathbf{x})=\begin{cases}
    \mathrm{Cat}(\mathbf{z}_s;\mathbf{z}_t)  &\mathbf{z}_t\neq\mathbf{m},\\
    \mathrm{Cat}(\mathbf{z}_s;\frac{(1-\alpha_s)\mathbf{m}+(\alpha_s-\alpha_t)\mathbf{x}}{1-\alpha_t}) &\mathbf{z}_t=\mathbf{m},
    \end{cases}
\end{equation}
providing a simple decoding process that progressively unmasks the masked tokens.
Moreover, with Rao-Blackwellization, the loss term in \Cref{eq:objective} simplifies to:
\begin{equation}
    \mathcal{L}=\mathbb{E}_{\mathbf{z}_t,t}\left[\frac{\alpha_t'}{1-\alpha_t}  \log(\langle \mathbf{x}_\theta(\mathbf{z}_t, t), \mathbf{x} \rangle)\right], \label{eq:MDLM_objective}
\end{equation}
where $\alpha_t'=\frac{\partial\alpha_t}{\partial t}$.

MDM's use of a distinguishable mask token is particularly beneficial for conditional generation.
By explicitly indicating which positions are observed (conditioning context) and which must be predicted, the mask token enables a clean separation between context encoding and target decoding.
As a result, diverse conditioning patterns (including prefix completion and span infilling) can be implemented simply by selecting the mask region, leading to robust performance on conditional generation tasks (\eg text completion) in both finetuned and training-free settings~\cite{gong2025diffucoder, LLaDA, xie2025dream, MD4}.



\subsection{Factorization Error of Discrete Diffusion Models}
\label{sec:prelim_facto}
Discrete diffusion models typically avoid modeling the full joint transition kernel $p_\theta(\mathbf{z}_t|\mathbf{z}_s)$, since doing so would require predicting a categorical distribution over $K^L$ joint configurations, \ie a probability vector in $\Delta^{K^L-1}$, which is computationally infeasible for most sequences.
Instead, they use the factorized approximation $p_\theta(\mathbf{z}_t^{1:L}|\mathbf{z}_s) \approx \prod_{\ell=1}^L p_\theta(\mathbf{z}_t^\ell|\mathbf{z}_s)$, which scales to high-dimensional data but discards dependencies among tokens. 
We refer to the resulting mismatch between the joint and the model's factorized prediction as \emph{factorization error}, which can be quantified via the model-dependent conditional total correlation~\cite{ReDi, hayakawa2024distillation, liu2025discrete}:
\begin{equation}
    TC_\theta(\mathbf{z}_s|\mathbf{z}_t)=\mathbb{E}_{\mathbf{z}_t}\left[D_{KL}(p(\mathbf{z}_s|\mathbf{z}_t)||\prod_{\ell=1}^Lp_\theta(\mathbf{z}_s^\ell|\mathbf{z}_t))\right]. \label{eq:TC}
\end{equation}
$TC_\theta$ measures the gap between the true joint and the model's factorized prediction, and reduces to the data-only total correlation when $p_\theta$ matches the true per-token marginals.
Specifically, since the factorized model treats the posterior of each token independently, simultaneous updates to jointly correlated tokens distort the true distribution, resulting in poor generation quality.
In few-step generation, larger time gaps between $s$ and $t$ necessitate simultaneous updates of multiple tokens, making the addressing of factorization error essential for high-quality generation~\cite{ReDi,hayakawa2024distillation}.
To mitigate the error, prior works aim to reduce the discrepancy by approximating the joint posterior with a teacher model's multi-step prediction~\cite{SDTT, hayakawa2024distillation} or with a rectified coupling constructed by generating data with a teacher model~\cite{ReDi}.



\section{Method}
As discussed in \Cref{sec:discrete_diffusion}, the advantages of MDMs appear to stem primarily from the clear distinction between masked and valid data tokens (\ie $\langle \mathbf{x}, \mathbf{m}\rangle=0$). This observation raises the question of whether a single, deterministic mask token is strictly necessary to maintain these benefits.

In this section, we first argue that the use of a single deterministic mask token results in the irreducible factorization error that acts as a fundamental bottleneck for existing few-step distillation methods (\Cref{sec:MDM_error}).
Then, we propose the Infinite Mask Diffusion Model (IMDM), which introduces infinite stochastic mask tokens to MDMs (\Cref{sec:IMDM}).
By leveraging the distinction property of MDM while avoiding the use of a single, deterministic mask token, IMDM mitigates the theoretical lower bound of the factorization error, while leveraging MDM's simple framework and effective conditional generation.

\subsection{Factorization Error Bound of MDMs}
\label{sec:MDM_error}

As discussed in \Cref{sec:prelim_facto}, the factorization error arises when correlated tokens are updated simultaneously.
In this section, we argue that the use of a single, deterministic mask token incurs an irreducible factorization error lower bound.
To theoretically analyze the lower bound of this error specifically in MDMs, we consider a scenario where the $i$-th token $\mathbf{z}_t^{i}$ and the $j$-th token $\mathbf{z}_t^{j}$ are both masked at timestep $t$ and unmasked between $t$ and $s$.
Let $p(e_{ij})$ denote the probability of this specific event occurring.

\begin{theorem}[Lower Bound of Factorization Error in MDMs]
\label{thm:lower_bound}
Let $e_{ij}$ be the event where tokens at indices $i$ and $j$ are simultaneously decoded. Then, for any MDM with parameters $\theta$, the conditional total correlation is lower-bounded by their conditional mutual information weighted by the event probability:
\begin{equation}
    TC_\theta(\mathbf{z}_s|\mathbf{z}_t) \ge \max_{i,j} p(e_{ij}) \cdot I(\mathbf{z}_s^i ; \mathbf{z}_s^j \mid \mathbf{z}_t^{\setminus \{i,j\}}), \label{eq:LB_main}
\end{equation}
where $\mathbf{z}_t^{\setminus \{i,j\}}$ denotes the set of tokens at timestep $t$ excluding the tokens at indices $i$ and $j$.
\end{theorem}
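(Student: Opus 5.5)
The plan is to reduce the full sequence-level total correlation to a two-token quantity on the event $e_{ij}$, and then use the fact that a factorized model cannot represent dependence between two coordinates.

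\textbf{Step 1: localize to the event $e_{ij}$.} Fix a pair $(i,j)$. Since the KL divergence is nonnegative, the expectation over $\mathbf{z}_t$ in \Cref{eq:TC} is bounded below by its restriction to those $\mathbf{z}_t$ for which $e_{ij}$ has positive probability. On such $\mathbf{z}_t$ we have $\mathbf{z}_t^i=\mathbf{z}_t^j=\mathbf{m}$.

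\textbf{Step 2: marginalize to tokens $i,j$.} Apply the data-processing inequality for KL to the map $\mathbf{z}_s\mapsto(\mathbf{z}_s^i,\mathbf{z}_s^j)$. The KL between the joint posterior and the product model is at least the KL between the true pair law $p(\mathbf{z}_s^i,\mathbf{z}_s^j\mid\mathbf{z}_t)$ and the product $p_\theta(\mathbf{z}_s^i\mid\mathbf{z}_t)\,p_\theta(\mathbf{z}_s^j\mid\mathbf{z}_t)$.

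\textbf{Step 3: lower-bound by mutual information.} For any product distribution $r_1\otimes r_2$ we have
\[
D_{KL}(p_{ij}\,\|\,r_1\otimes r_2)=I(\mathbf{z}_s^i;\mathbf{z}_s^j\mid\mathbf{z}_t)+D_{KL}(p_i\|r_1)+D_{KL}(p_j\|r_2)\ge I(\mathbf{z}_s^i;\mathbf{z}_s^j\mid\mathbf{z}_t).
\]
So, independently of $\theta$, the two-token KL is at least the conditional mutual information of the true posterior given this particular $\mathbf{z}_t$.

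\textbf{Step 4: condition on the decoding event.} Mutual information under the two-token law mixes the branches in which $i$ or $j$ stays masked with the branch in which both are decoded. The relevant dependence lives in the branch where both are decoded. I would condition additionally on the unmasking indicators, which are independent of $\mathbf{x}$ in MDMs, with per-token unmasking probability $\frac{\alpha_s-\alpha_t}{1-\alpha_t}$ from \Cref{eq:MDLM_reverse}. To make this rigorous, I would write $\mathbf{z}_s^i=(\text{mask indicator},\text{value})$ and use the chain rule for mutual information. If the value is $\mathbf{x}^i$ exactly when the token is unmasked, the information between the pairs is at least $p(e_{ij}\mid\mathbf{z}_t)$ times $I(\mathbf{x}^i;\mathbf{x}^j\mid\mathbf{z}_t)$. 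Averaging over $\mathbf{z}_t$ then gives $p(e_{ij})\,I(\cdot\,;\cdot\mid\cdot)$.

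\textbf{Step 5: identify the conditioning.} The key MDM property is that $\mathbf{z}_t^i=\mathbf{z}_t^j=\mathbf{m}$ deterministically. Hence conditioning on $\mathbf{z}_t$ is the same as conditioning on $\mathbf{z}_t^{\setminus\{i,j\}}$, which gives exactly the statement's $I(\mathbf{z}_s^i;\mathbf{z}_s^j\mid\mathbf{z}_t^{\setminus\{i,j\}})$. This is the structural point where the single deterministic mask enters: nothing in $\mathbf{z}_t^i,\mathbf{z}_t^j$ can carry latent randomness that would correlate the decoded values. Since the argument works for every pair, we take the maximum over $(i,j)$.

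\textbf{Main obstacle.} Step 4 is the delicate step: it requires a careful decomposition of the mutual information into the mask-indicator part and the value part on $e_{ij}$. It also requires matching the statement's (somewhat informal) conditional mutual information to the version conditioned on both being decoded. I would first try defining the MI term explicitly on the event $e_{ij}$ and use the chain rule, dropping the nonnegative indicator terms.
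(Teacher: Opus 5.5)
Your proposal is correct and follows essentially the paper's route: remove the model via the decomposition $TC_\theta = TC + \sum_\ell \mathbb{E}[D_{KL}(p(\mathbf{z}_s^\ell|\mathbf{z}_t)\|p_\theta(\mathbf{z}_s^\ell|\mathbf{z}_t))]$ (your Step 3 at the pair level), pass to $I(\mathbf{z}_s^i;\mathbf{z}_s^j\mid\mathbf{z}_t)$ by KL monotonicity, restrict to $e_{ij}$, and use the deterministic mask so that conditioning on $\mathbf{z}_t$ collapses to conditioning on $\mathbf{z}_t^{\setminus\{i,j\}}$ (the paper phrases this last step as $I(A;B\mid C)\ge I(A;B)-H(C)$ with $H=0$). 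The only real difference is your Step 4: the paper splits the mutual information over $e_{ij}$, read as the $\mathbf{z}_t$-measurable event that both tokens are masked at $t$, and notes it vanishes on the complement, while your chain rule over the unmasking indicators justifies conditioning on the finer ``both decoded'' event; this is slightly more careful and gives the same bound, since those indicators are independent of $\mathbf{x}$ and $p(e_{ij}\mid\mathbf{z}_t)=\bigl(\tfrac{\alpha_s-\alpha_t}{1-\alpha_t}\bigr)^2$ is constant on the masked set.
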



\begin{proof}[Sketch of Proof]
Since $TC_\theta(\mathbf{z}_s|\mathbf{z}_t) \ge TC(\mathbf{z}_s|\mathbf{z}_t)$ for any $\theta$, where $TC(\mathbf{z}_s|\mathbf{z}_t)=\mathbb{E}_{\mathbf{z}_t}[D_{KL}(p(\mathbf{z}_s|\mathbf{z}_t)\|\prod_\ell p(\mathbf{z}_s^\ell|\mathbf{z}_t))]$ is the data-only total correlation, it suffices to lower-bound $TC$. We derive the following bound by leveraging the distinction property of MDM (\ie $\langle \mathbf{x}, \mathbf{m} \rangle=0$):
\begin{align}
    TC(\mathbf{z}_s|\mathbf{z}_t) &\ge \mathrm{LB}(e_{ij}), \text{ where} \notag \\
    \mathrm{LB}(e_{ij}) &= p(e_{ij}) \bigg[
        I(\mathbf{z}_s^i ; \mathbf{z}_s^j \mid \mathbf{z}_t^{\setminus \{i,j\}}, e_{ij}) \notag \\
        &\quad - H\left( (\mathbf{z}_t^i,\mathbf{z}_t^j) \mid \mathbf{z}_t^{\setminus\{i,j\}}, e_{ij} \right)
    \bigg]. \label{eq:LB_term}
\end{align}
Then, we use the following equation as the states of masked tokens are deterministic (\ie $H(\mathbf{z}_i)=H(\mathbf{z}_j) = 0$):
\begin{equation}
    H\left((\mathbf{z}_t^i,\mathbf{z}_t^j) \mid \mathbf{z}_t^{\setminus\{i,j\}}, e_{ij}\right) = 0. \label{eq:H_zero}
\end{equation}
Substituting these into \Cref{eq:LB_term}, and noting that the inequality holds for any pair of indices $i$ and $j$, we obtain the stated lower bound by taking the maximum over all pairs.
The complete proof is provided in \Cref{appx:proof_LB}.
\end{proof}

The theorem suggests that the lower bound of the factorization error becomes more pronounced in two primary scenarios.
First, the error bound tends to increase as the step size grows, a characteristic of few-step generation.
As indicated in \Cref{eq:MDLM_reverse}, the first term in the bound, $p(e_{ij})$, scales with $(\frac{\alpha_s-\alpha_t}{1-\alpha_t})^{2}$.
This relationship implies that the lower bound rises as the step size $\Delta_t=(\alpha_t-\alpha_s)$ increases, particularly as the timestep $t$ approaches the data distribution ($\alpha_t \to 1$).
Consequently, the factorization error lower bound of MDMs may pose a significant challenge in few-step decoding regimes, which necessitate larger step sizes.


The second scenario involves a significant value of the conditional mutual information term $I(\mathbf{z}_s^i ; \mathbf{z}_s^j \mid \mathbf{z}_t^{\setminus \{i,j\}}, e_{ij})$, which quantifies the correlation between unmasked tokens $\mathbf{z}_s^i$ and $\mathbf{z}_s^j$ given the previous state $\mathbf{z}_t^{\setminus \{i,j\}}$.
Given that discrete diffusion models are widely applied to natural language tasks which usually have a strong dependency between tokens, it is reasonable to assume the existence of token pairs exhibiting non-negligible mutual information.
These observations collectively suggest that the presence of this theoretical bound may constrain the ability of MDMs to generate semantically consistent data within the few-step regime.
Furthermore, as this lower bound is inherent to the formulation and independent of the specific model architecture, it likely limits the potential gains from distillation, even when employing powerful optimization techniques.



\subsection{Infinite Mask Diffusion Model}
\label{sec:IMDM}
\begin{figure}
    \centering
    \includegraphics[width=0.9\linewidth]{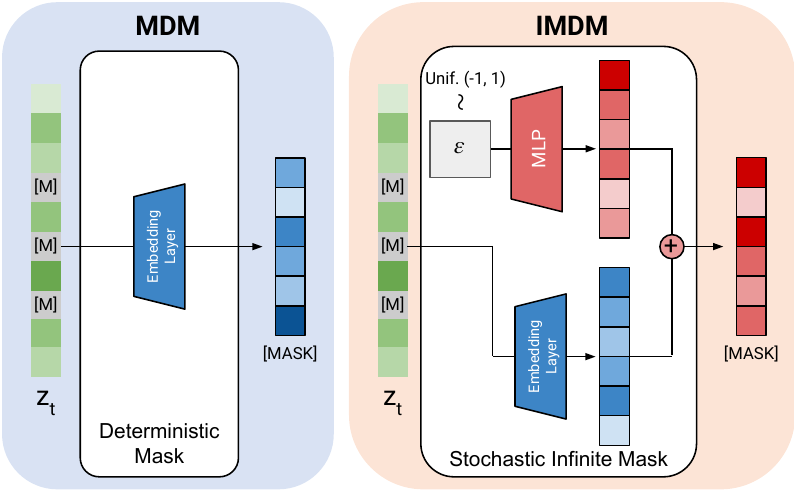}
    \caption{An overview figure of IMDM. In IMDM, $\epsilon$ is sampled from a uniform distribution and processed through an MLP to generate a stochastic component, which is then added to the base mask embedding. This addition renders the final mask embedding stochastic, inheriting the randomness of $\epsilon$, and enabling the model to simulate an infinite variety of masks.}
    \label{fig:overview}
    \vspace{-2em}
\end{figure}
In \Cref{sec:MDM_error}, we observed that the lower bound of factorization error, which is inherent to the MDM framework, poses a potential barrier to few-step generation, even when distillation techniques are applied.

To address the inherent limitations of MDMs while preserving their simple design and effective generation performance, we propose the Infinite Mask Diffusion Model (IMDM), as illustrated in \Cref{fig:overview}.
Motivated by the observation that the benefits of MDMs likely stem from the distinguishability of the mask token rather than its deterministic single-state nature, we formulate IMDM to utilize an infinite set of stochastic mask tokens that remain strictly distinguishable from the data.
Specifically, we start the derivation of IMDM from uniform discrete diffusion over an extended state space and show that it reduces to a purely masked diffusion form by enforcing the distinguishability between data and mask tokens. This derivation strategy allows IMDM to directly inherit the NELBO of uniform discrete diffusion~\cite{UDLM} (see \Cref{appx:IMDM_details}).

In the following paragraphs, we first introduce the formulation of IMDM, highlighting that its training and decoding processes closely mirror the simple structure of standard MDMs.
Next, we provide a theoretical analysis suggesting that the factorization error lower bound observed in MDMs is effectively mitigated in our framework; specifically, we demonstrate the existence of an optimal model capable of minimizing this error.
Finally, we discuss the compatibility between IMDM and MDM, providing implementation details that allow for the efficient reuse of pre-trained MDM weights.

\paragraph{IMDM Formulation}
To simultaneously achieve the distinguishability of mask tokens and the flexibility of infinite stochastic masks, we define the state space $\mathcal{Z}$ as the union of the data vocabulary $\mathcal{V}$ and an infinite set of latent mask states $\mathcal{M}$ (where $|\mathcal{M}|\to\infty$).
Recognizing that the explicit separation between masked and unmasked tokens facilitates the simple formulation and effective conditional generation of MDMs, we enforce a strict disjoint property between the data and the set of latent mask states as follows:
\begin{equation}
    \mathcal{Z} = \mathcal{V} \cup \mathcal{M}, \quad \text{where } \mathcal{V} \cap \mathcal{M} = \emptyset. \label{eq:state_space}
\end{equation}

To introduce stochasticity to the mask tokens, we formulate the IMDM forward process as uniform discrete diffusion over the extended state space $\mathcal{Z}$.
Specifically, the forward transition probability of IMDM $q_{\text{IMDM}}(\mathbf{z}_t | \mathbf{x})$ is defined as:
\begin{align}
    q_{IMDM}(\mathbf{z}_t | \mathbf{x}) &= \alpha_t \mathbf{x} + \frac{1-\alpha_t}{K}\mathbf{1} \notag \\
    &=\alpha_t \mathbf{x} + (1-\alpha_t)\left(\frac{N}{K}\boldsymbol{\pi}_{\mathcal{V}}+\frac{M} {K}\boldsymbol{\pi}_{\mathcal{M}}\right), \label{eq:IMDM_full}
\end{align}
where $N=|\mathcal{V}|$ and $M=|\mathcal{M}|$ denote the sizes of the data and latent mask categories, respectively, and $K=N+M$ is the total dimension.
The vectors $\boldsymbol{\pi}_\mathcal{V},\boldsymbol{\pi}_\mathcal{M}$ denote probability vectors representing a uniform distribution over $\mathcal{V}$ and $\mathcal{M}$ respectively.
Considering the limit $M \to \infty$, the terms simplify (since $N/K \to 0$ and $M/K \to 1$), yielding:
\begin{equation}
    q_{IMDM}(\mathbf{z}_t | \mathbf{x}) = \alpha_t\mathbf{x}+(1-\alpha_t)\boldsymbol{\pi}_{\mathcal{M}}. \label{eq:IMDM_simple}
\end{equation}
The resulting forward process closely resembles MDM while a token transitions to one of the latent mask states in $\mathcal{M}$ uniformly at random with probability $1-\alpha_t$.
Furthermore, similar to \citet{MDLM}, which relies on the separation between the mask token and data, IMDM preserves the distinction property (\ie $\langle \mathbf{z}, \mathbf{x} \rangle=0$ for all $\mathbf{z}\in\mathcal{M}$).
Leveraging the property, the posterior distribution is given by:
\begin{equation}
    q(\mathbf{z}_s|\mathbf{z}_t,\mathbf{x})=\begin{cases}
        \mathrm{Cat}(\mathbf{z}_s;\mathbf{z}_t) & \mathbf{z}_t\in\mathcal{V},\\
    \mathrm{Cat}(\mathbf{z}_s;\frac{(1-\alpha_s)\boldsymbol{\pi}'_{\mathcal{M}}+(\alpha_s-\alpha_t)\mathbf{x}}{1-\alpha_t}) &\mathbf{z}_t\in\mathcal{M},
    \end{cases} \label{eq:IMDM_reverse}
\end{equation}
where $\boldsymbol{\pi}'_{\mathcal{M}}=\alpha_{t|s}\mathbf{z}_t+(1-\alpha_{t|s})\boldsymbol{\pi}_\mathcal{M}$.

In addition to the posterior distribution, the training objective of IMDM is identical to MDM's objective in \Cref{eq:objective}:
\begin{equation}
    \mathcal{L}_{IMDM}=\mathbb{E}_{\mathbf{z}_t,t}\left[\frac{\alpha_t'}{1-\alpha_t}  \log(\langle \mathbf{x}_\theta(\mathbf{z}_t, t), \mathbf{x} \rangle)\right], \label{eq:nelbo_imdm}
\end{equation}
where $\alpha_t'=\frac{\partial \alpha_t}{\partial t}$. \Cref{eq:nelbo_imdm} shares the same form as MDM's NELBO, demonstrating the seamless design of IMDM, and is used only to verify that pre-trained MDM weights transfer to IMDM (\Cref{appx:equivalence_mdm_imdm}); our few-step generation experiments instead train with distillation methods (see \Cref{sec:exp_lm1b}).
Full derivations regarding the posterior and NELBO can be found in \Cref{appx:IMDM_details}.

As shown in \Cref{eq:IMDM_simple,eq:nelbo_imdm}, IMDM closely resembles the simple and effective framework of MDMs.
By resembling the simple structure with explicit distinction between masked and unmasked tokens, IMDM can inherit the benefits of MDMs, including their performance on conditional generation.
More precisely, identifying $\mathcal{M}$ as the single token $\mathbf{m}$ collapses the IMDM state space $\mathcal{Z} = \mathcal{V} \cup \mathcal{M}$ to the MDM state space $\mathcal{V}^+ = \mathcal{V} \cup \{\mathbf{m}\}$ and reduces the IMDM forward of \Cref{eq:IMDM_simple} to the standard MDM forward. This structural relationship enables seamless weight transfer from a pretrained MDM to IMDM, which we further confirm empirically in \Cref{appx:equivalence_mdm_imdm}. The next subsection shows that keeping $\mathcal{M}$ uncollapsed admits an optimal model that mitigates the factorization-error bound of \Cref{thm:lower_bound}.

\paragraph{Implementation of IMDM}
Although the theoretical framework closely mirrors that of MDMs, the practical success of IMDM in few-step distillation depends on the ability to effectively utilize pre-trained MDM parameters.
In order to seamlessly adapt pre-trained MDMs for IMDM distillation, we define the embedding layer $E_\theta:\mathcal{Z}\to\mathbb{R}^d$ as:
\begin{equation}
    E_\theta(\mathbf{z})=\begin{cases}
    E_\psi(\mathbf{z}) & \mathbf{z}\in\mathcal{V},\\
    E_\psi(\mathbf{m})+\mathrm{MLP}_\theta(\epsilon) &\mathbf{z} \in \mathcal{M},
    \end{cases} \label{eq:embedding}
\end{equation}
where $\psi$ denotes the parameters of the pre-trained MDM, $\mathbf{m}$ represents the mask token of the pre-trained model, and $\epsilon \in \mathbb{R}^d$ is a continuous noise vector that follows $\mathrm{Unif.}(-1,1)$.
Since explicitly instantiating the infinite set $\mathcal{M}$ is intractable, this formulation allows us to represent the mask states implicitly by injecting sampled continuous noise $\epsilon$ into the fixed mask embedding.
We initialize the weights and biases of the output layer of $\mathrm{MLP}_\theta$ to zero, ensuring that the IMDM initially behaves identically to the pre-trained MDM.
This design effectively models a high-entropy mask state (approximating $|\mathcal{M}|\to\infty$) while preserving the observable behavior of the original MDM, thereby enabling the seamless reuse of pre-trained weights.
Consequently, the compatible design allows for the direct application of existing distillation methods on the IMDM framework.

\paragraph{Distillation with IMDM}
The main goal of IMDM is to mitigate the theoretical limitation of MDMs on few-step generation.
By introducing stochasticity, IMDM guarantees the existence of parameter $\theta$ that reduces the factorization error to 0.
\begin{theorem}[Zero Factorization Error in IMDM]
\label{thm:imdm_lower_bound}
There exist an IMDM forward process and parameters $\theta^*$ such that
\begin{equation}
    TC_{\theta^*}(\mathbf{z}_s|\mathbf{z}_t) = 0
    \quad \text{for all } \mathbf{z}_s, \mathbf{z}_t \in \mathcal{Z}.
    \label{eq:distill}
\end{equation}
\end{theorem}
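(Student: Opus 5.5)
Our plan is to construct an IMDM forward process and a parameter $\theta^*$ explicitly, so that the stochastic mask carries all the randomness needed to decode jointly. Conditioned on its mask state, each token then becomes a deterministic function of $\mathbf{z}_t$. The guiding principle is the following: if the true joint posterior $p(\mathbf{z}_s\mid\mathbf{z}_t)$ is a point mass, then it equals the product of its marginals. If in addition $p_{\theta^*}(\mathbf{z}_s^\ell\mid\mathbf{z}_t)$ matches those marginals, the KL term in \Cref{eq:TC} vanishes pointwise in $\mathbf{z}_t$, and hence $TC_{\theta^*}=0$. This is exactly the mechanism that \Cref{thm:lower_bound} rules out for MDMs. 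There, \Cref{eq:H_zero} forces the masked states to have zero entropy, so the conditional mutual information cannot be absorbed.

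\textbf{Step 1: a deterministic coupling.} Since $|\mathcal{M}|\to\infty$, we may take $\mathcal{M}$ to be (a discretization fine enough to approximate) a continuum, e.g. $\mathcal{M}\cong[0,1]^L$ per sequence, or equivalently an i.i.d. uniform label per position. We would choose an IMDM forward process in which the mask labels of a sequence encode a sample from the data distribution. Concretely, pick a measurable map $G$ with $G(U)\sim p_{\text{data}}$ for $U$ uniform. This is possible for any finite distribution on $\mathcal{V}^L$, using an inverse-CDF over a fixed ordering of $\mathcal{V}^L$ and splitting $U$ across positions via the infinitely many mask states. Define the coupling by drawing $U$, setting $\mathbf{x}=G(U)$, and letting each masked position carry its component of $U$. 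With this choice the per-token masked marginals are still uniform over $\mathcal{M}$, so the process agrees with \Cref{eq:IMDM_simple} marginally per token, and the disjointness \Cref{eq:state_space} is kept.

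\textbf{Step 2: determinism of the posterior.} Under this process, the pair consisting of $\mathbf{z}_t$ and the random unmasking order determines $\mathbf{x}$. More precisely, $\mathbf{x}$ is determined by the visible tokens together with the mask labels. The unmasking of positions between $t$ and $s$ must also be made a deterministic function of $\mathbf{z}_t$. We would do this by encoding the unmasking time of each position into its own mask label as well, for example by using a second coordinate of $U$ as that position's threshold against $\alpha$. Then $\mathbf{z}_s=F_{s,t}(\mathbf{z}_t)$ almost surely, and $p(\mathbf{z}_s\mid\mathbf{z}_t)=\delta_{F_{s,t}(\mathbf{z}_t)}$.

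\textbf{Step 3: the parameter $\theta^*$.} Let $\mathbf{x}_{\theta^*}$ (or directly the per-token reverse kernel) output the one-hot vector $F_{s,t}(\mathbf{z}_t)^\ell$ at each position $\ell$. The transformer is a function of the whole sequence $\mathbf{z}_t$, and the embedding \Cref{eq:embedding} is injective on mask labels, so by a universal approximation assumption such a $\theta^*$ exists. The product of point masses equals the joint point mass, so the KL divergence is $0$ for every $\mathbf{z}_t$, which gives \Cref{eq:distill}.

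The main obstacle is Step 1/2: making the forward process simultaneously (i) consistent with IMDM's per-token form \Cref{eq:IMDM_simple} and the posterior \Cref{eq:IMDM_reverse}, and (ii) rich enough that the mask labels determine both $\mathbf{x}$ and the unmasking schedule. The statement's phrasing "there exist an IMDM forward process" suggests that we are allowed to choose this coupling. We would first try the simplest version, in which the mask labels act only as shared noise, and verify that the marginal per-token law is unchanged. If full determinism of the schedule proves awkward, a fallback is to condition on the unmasking event and show that the factorization error is zero on each event, which suffices for the expectation.
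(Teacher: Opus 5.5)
Your mechanism is the paper's partition-and-map idea. The infinitely many mask labels carry the randomness, so the joint posterior becomes a point mass, and a lookup-table $\theta^*$ outputs per-token point masses whose product equals the joint. The gap is in Steps 1--2. For the coupling you specify, the claim that $\mathbf{x}$ is determined by the visible tokens together with the mask labels is false whenever $\mathbf{z}_t$ is only partially masked. That coupling is an inverse CDF over a fixed ordering of $\mathcal{V}^L$, with $U$ split across positions. Once position $\ell$ is unmasked, its component $U^\ell$ is discarded and replaced by $\mathbf{x}^\ell$. That token only reveals which cell of a partition $U$ lies in, while the hidden block generally still depends on the discarded components.

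Here is a concrete case. Take $L=3$, with $\mathbf{x}^1$ uniform on $\{a,b\}$ and, independently, $(\mathbf{x}^2,\mathbf{x}^3)\in\{aa,bb\}$ with $P(aa)=1/3$. Use the lexicographic inverse CDF of a single uniform $W$, whose binary digits are dealt round-robin to $U^1,U^2,U^3$. Then $\mathbf{x}^1=a$ is the event $\{W<1/2\}$, and deciding $aa$ versus $bb$ means deciding whether $W<1/6$. For a positive-probability set of labels $(U^2,U^3)$, that decision hinges on the fourth binary digit of $W$, which belonged to $U^1$ and is lost. On that event $p(\mathbf{z}_s|\mathbf{z}_t)$ is a $\frac12$--$\frac12$ mixture of two perfectly correlated pairs, so $TC_\theta>0$ for every factorized $\theta$. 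Your fallback of conditioning on the unmasking event does not help, because the obstruction comes from the visible context, not from the schedule.

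The repair is to build $G$ sequentially. Take $U^1,\dots,U^L$ i.i.d. uniform and set $\mathbf{x}^\ell=h_\ell(\mathbf{x}^{<\ell},U^\ell)$, where $h_\ell(\cdot,u)$ is the inverse CDF of $p(\mathbf{x}^\ell\mid\mathbf{x}^{<\ell})$. Then for any masked set $S$ you recover $\mathbf{x}^S$ left to right from $\mathbf{x}^{S^c}$ and $U^S$. Combined with your threshold coordinate, this makes $\mathbf{z}_s$ genuinely a function of $\mathbf{z}_t$.

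The paper sidesteps any global consistency requirement. It fixes $(s,t)$, conditions on $(e_E,\mathbf{z}_t^{E^C})$, and lets the map $F_{E,\mathbf{z}_t^{E^C}}$ depend on the visible context. This realizes $p(\mathbf{z}_s^E|\mathbf{z}_t,e_E)$ as a pushforward of the labels $\mathbf{z}_t^E$.

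Once repaired, your threshold encoding is a genuine refinement. It makes $e_E$ a function of $\mathbf{z}_t$, so you control $TC_{\theta^*}(\mathbf{z}_s|\mathbf{z}_t)$ as actually defined. The paper's argument instead bounds the version conditioned on $e_E$, via $p_{\theta^*}(\cdot|\mathbf{z}_t,e_E)$. This comes with costs:
\begin{itemize}
\item Your reverse kernel must decide deterministically whether to unmask. The standard $q(\mathbf{z}_s|\mathbf{z}_t,\mathbf{x}_\theta)$ cannot do this, since its unmasking probability is fixed at $(\alpha_s-\alpha_t)/(1-\alpha_t)$.
\item Given that a position is masked at $t$, the threshold coordinate is uniform on $(\alpha_t,1]$, so rescale it if you want uniform labels.
\item Universal approximation only makes $TC$ arbitrarily small, because softmax outputs are never exact point masses. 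For exact zero, take $\theta^*$ to be a lookup table, as the paper does.
\end{itemize}
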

\begin{proof}
The proof is provided in \Cref{appx:pf_optimal_coupling}.
\end{proof}

The theorem suggests that, unlike MDMs which appear to be limited by a structural lower bound of factorization error, IMDM offers a theoretical capability to mitigate this error when the factorization error captured by \Cref{eq:distill} is reduced.

Concretely, IMDM admits a construction of an optimal model via a partition-and-map mechanism that exploits the stochasticity of its mask states. Let $E$ denote the set of positions that are masked at $t$ and unmasked between $t$ and $s$. Conditioned on the unmasked context $\mathbf{z}_t^{E^C}$, the construction defines a deterministic map $F:\mathcal{M}^{|E|}\to\mathcal{V}^{|E|}$ that takes the mask-state values $\mathbf{z}_t^E$ as input and returns a token sequence in $\mathcal{V}^{|E|}$. Since $|\mathcal{M}|\to\infty$, $\mathbf{z}_t^E$ admits arbitrarily fine partitions whose subset probabilities match those of $p(\mathbf{z}_s^E|\mathbf{z}_t)$, and $F$ deterministically assigns each subset to a valid sequence; marginalizing over $\mathbf{z}_t^E$ then recovers the true conditional. In contrast, MDMs use a single deterministic mask token, so $\mathbf{z}_t^E$ takes a single fixed value with no variation to partition; the MDM model is therefore restricted to factorized predictions and incurs the structural lower bound formalized in \Cref{thm:lower_bound}. We emphasize that this construction shows the existence of an optimal IMDM, not that the trained model automatically realizes it. The formal construction is provided in \Cref{appx:pf_optimal_coupling}, and we empirically validate this prediction in \Cref{sec:exp_synth}.

We interpret the iterative distillation procedures of SDTT~\cite{SDTT}, ReDi~\cite{ReDi}, and Di4C~\cite{hayakawa2024distillation} as progressively reducing the factorization error of \Cref{eq:distill}: SDTT iteratively transfers longer effective rollouts into 1-step student predictions; ReDi iteratively rectifies the coupling between $\mathbf{z}_t$ and $\mathbf{x}$, an empirical analog of the construction in \Cref{thm:imdm_lower_bound}'s proof; Di4C distills dimensional correlations across rounds.
\section{Experiments}
\label{sec:experiments}
In this section, we evaluate our proposed method, IMDM, across three distinct scenarios.
We first introduce the experimental setup for standard language modeling benchmarks (\Cref{sec:exp_setup}).
Next, we validate the theoretical implication of the factorization error bound and analyze the role of infinite mask of IMDM using a controlled synthetic dataset (\Cref{sec:exp_synth}).
Then, we conduct an in-depth analysis of distillation strategies and model compatibility on the One Billion Word (LM1B) dataset (\Cref{sec:exp_lm1b}).
Finally, we demonstrate the few-step generation performance of IMDM on OpenWebText benchmark (\Cref{sec:exp_owt}).

\subsection{Experimental Setup for Language Modeling}

In this subsection, we detail the configuration shared across our language modeling experiments on LM1B and OpenWebText.
Specific details for the synthetic dataset are provided in \Cref{sec:exp_synth}.

\paragraph{Datasets and Models}
We utilize two standard benchmarks: One Billion Word (LM1B)~\cite{lm1b} and OpenWebText~\cite{openwebtext}.
Following \citet{DUO}, we set the sequence length to 128 tokens for LM1B and 1024 tokens for OpenWebText, employing sentence-packing~\cite{raffel2020exploring} to match these lengths.
For tokenization, we use the BERT tokenizer~\cite{devlin2019bert} (vocab size 30,522) for LM1B and the GPT-2 tokenizer~\cite{Radford2019gpt2} (vocab size 50,257) for OpenWebText.
Consistent with prior works~\cite{MDLM, SDTT}, we adopt the GPT-2 Small architecture for all experiments.
Further implementation details are provided in \Cref{appx:exp_details}.

\paragraph{Evaluation Metrics}
We evaluate generation quality primarily using Generative Perplexity and Sample Entropy~\cite{dieleman2022continuous}.
Following standard protocols~\cite{hayakawa2024distillation, DUO}, we generate 1,024 samples and compute generative perplexity using a pre-trained GPT-2 Large model~\cite{Radford2019gpt2}.
To complement generative perplexity and identify potential failure modes such as mode collapse, we additionally report Sample Entropy~\cite{wang2022perplexity}.
For the conditional generation experiment on OpenWebText, we further employ the MAUVE score~\cite{pillutla2021mauve} to assess the alignment between the generated continuations and the given prompts~\cite{SDTT}.
In this conditional setting, following \citet{SDTT}, we randomly select 256 prompts (consisting of the first 50 tokens) from the test set and generate five completions of 1,024 tokens per prompt, using the first 100 generated tokens for evaluation.
All reported results use standard categorical sampling without temperature, top-k, or top-p adjustments.

\label{sec:exp_setup}

\begin{table}[t!]
\small
    \caption{Synthetic experiment result. Validity denotes the ratio of valid samples in generated samples, and token entropy denotes the entropy of marginal distribution. `Error' denotes the factorization error in \Cref{eq:TC} measured at full masking by marginalizing over $10{,}000$ noise realizations.}
    \centering
    \label{tab:synthetic_results}
    \begin{tabular}{c|ccc}
    \toprule
    Models & Validity & Token Entropy & Error \\
    \midrule
    MDM  &  49.8\% & 0.69 & 0.693 \\
    IMDM &  97.7\% & 0.69 & 0.082 \\
    \midrule
    Random Seq. & 50.0 \% & 0.69 & -- \\
    \bottomrule
    \end{tabular}
    \vspace{-1em}
\end{table}

\subsection{Empirical Analysis on Synthetic Dataset}
\label{sec:exp_synth}
We first demonstrate the practical implications of the factorization error bound discussed in \Cref{sec:MDM_error} using a synthetic dataset, which MDMs inherently fail to generate in a single step due to the deterministic nature of the mask token.
The synthetic dataset consists of binary sequences $\{00, 11\}$, where a sequence is considered valid only if the constituent tokens are identical.
Due to the perfect correlation between the tokens, standard MDMs are theoretically incapable of generating valid data with single-step decoding.
Specifically, since the model predicts each token independently from the single mask token, it samples each token from an independent uniform distribution (50:50), failing to capture the dependency.



To empirically validate that MDM has non-reducible error, we first train a standard MDM on the dataset and subsequently perform distillation using ReDi~\cite{ReDi} to obtain both a distilled MDM and a distilled IMDM.
We evaluated the models by generating 5,000 samples with one-step decoding and measuring two key metrics.
First, we measured Validity, defined as the ratio of generated sequences where the tokens are identical.
Second, we measured Token-Entropy, which evaluates the diversity of characters across the entire generated dataset (Ground Truth $= \log 2 \approx 0.69$).
Closer alignment with the Ground Truth (GT) values indicates better performance.

\Cref{tab:synthetic_results} presents the 1-step generation performance of each model.
While both MDM and IMDM accurately match the token-level marginal distribution, MDM fails to capture the inter-token correlations, resulting in a validity score comparable to that of random sampling.
In contrast, IMDM achieves near-perfect validity, demonstrating its capability to learn the underlying data structure and generate consistent sequences aligned with the Ground Truth (GT).
The factorization-error column further sharpens this contrast: MDM saturates at the predicted lower bound of $\ln 2 \approx 0.693$ from \Cref{thm:lower_bound}, while IMDM falls well below it to $0.082$, confirming that the bound is structural for MDM and that IMDM lifts it as established in \Cref{thm:imdm_lower_bound}.

To verify that IMDM realizes the partition-and-map mechanism in \Cref{sec:IMDM}, we examine the per-token output probabilities of IMDM under two distinct noise realizations $\epsilon_A$ and $\epsilon_B$.
The per-token output probabilities are summarized in \Cref{tab:synthetic_per_token}.
IMDM produces near-deterministic, internally consistent predictions that nevertheless vary with the noise realization: both tokens are predicted as $0$ under $\epsilon_A$, and $1$ under $\epsilon_B$.
This routing of distinct noise realizations to distinct coherent outcomes mirrors the partition-and-map construction underlying \Cref{thm:imdm_lower_bound}.
In contrast, MDM outputs near-uniform marginals at both positions regardless of input, consistent with the structural barrier of \Cref{thm:lower_bound}.

\begin{table}[t!]
\small
    \caption{Per-token output probabilities $P(\mathrm{token}=0)$ on the synthetic $\{00, 11\}$ task. IMDM yields near-deterministic predictions that shift coherently with the noise realization $\epsilon$, while MDM yields near-uniform predictions regardless of input.}
    \centering
    \label{tab:synthetic_per_token}
    \begin{tabular}{cc|cc}
    \toprule
    Model & Noise & Token 1 & Token 2 \\
    \midrule
    IMDM & $\epsilon_A$ & 98.2\% & 100.0\% \\
    IMDM & $\epsilon_B$ &  0.6\% &   0.4\% \\
    MDM  & --           & 49.7\% &  49.7\% \\
    \bottomrule
    \end{tabular}
    \vspace{-1em}
\end{table}


\begin{figure*}[t]
    \centering

    \begin{minipage}{\textwidth}
        \centering
        \begin{subfigure}{0.33\linewidth}
            \centering
            \includegraphics[width=1.0\linewidth]{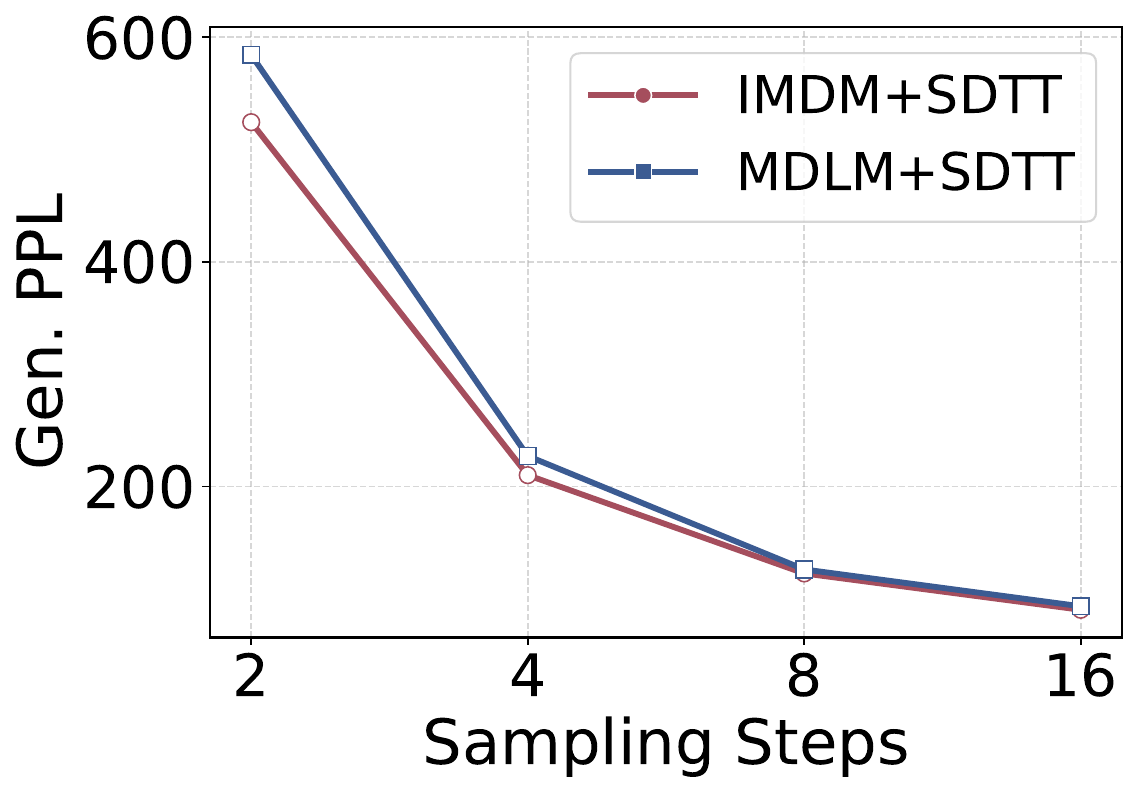}
        \end{subfigure}
        \hfill
        \begin{subfigure}{0.33\linewidth}
            \centering
            \includegraphics[width=1.0\linewidth]{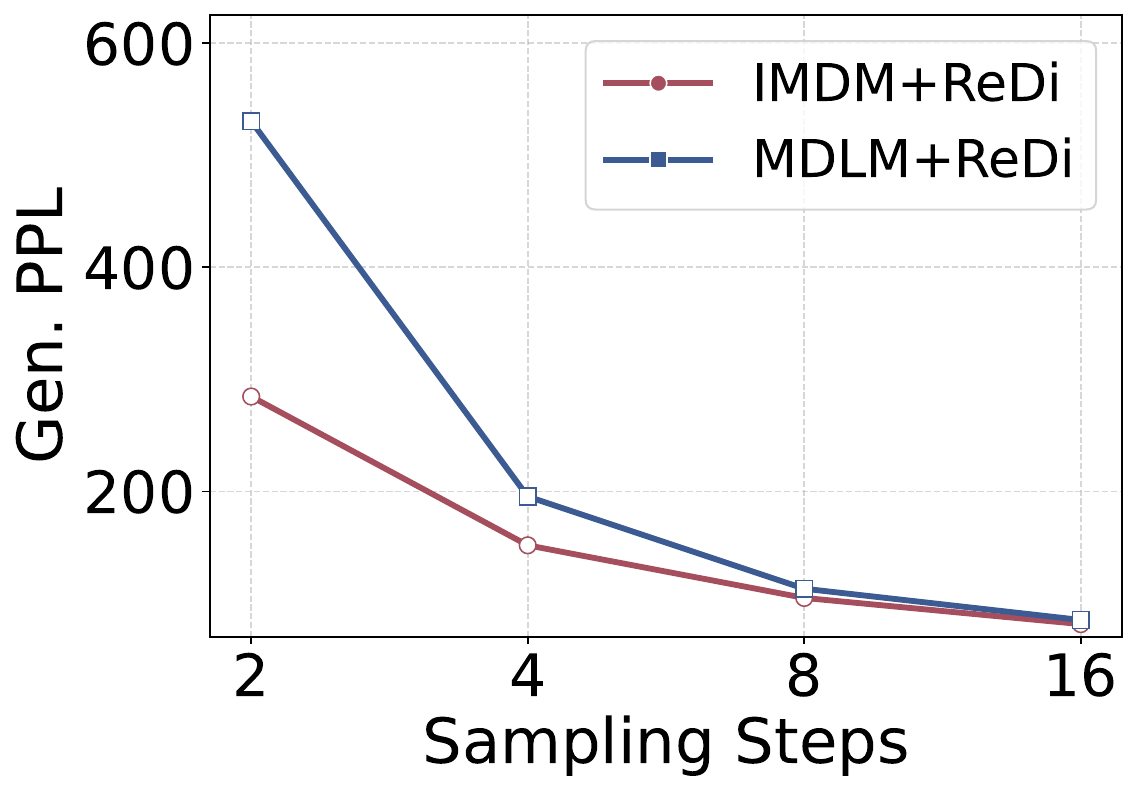}
        \end{subfigure}
        \hfill
        \begin{subfigure}{0.33\linewidth}
            \centering
            \includegraphics[width=1.0\linewidth]{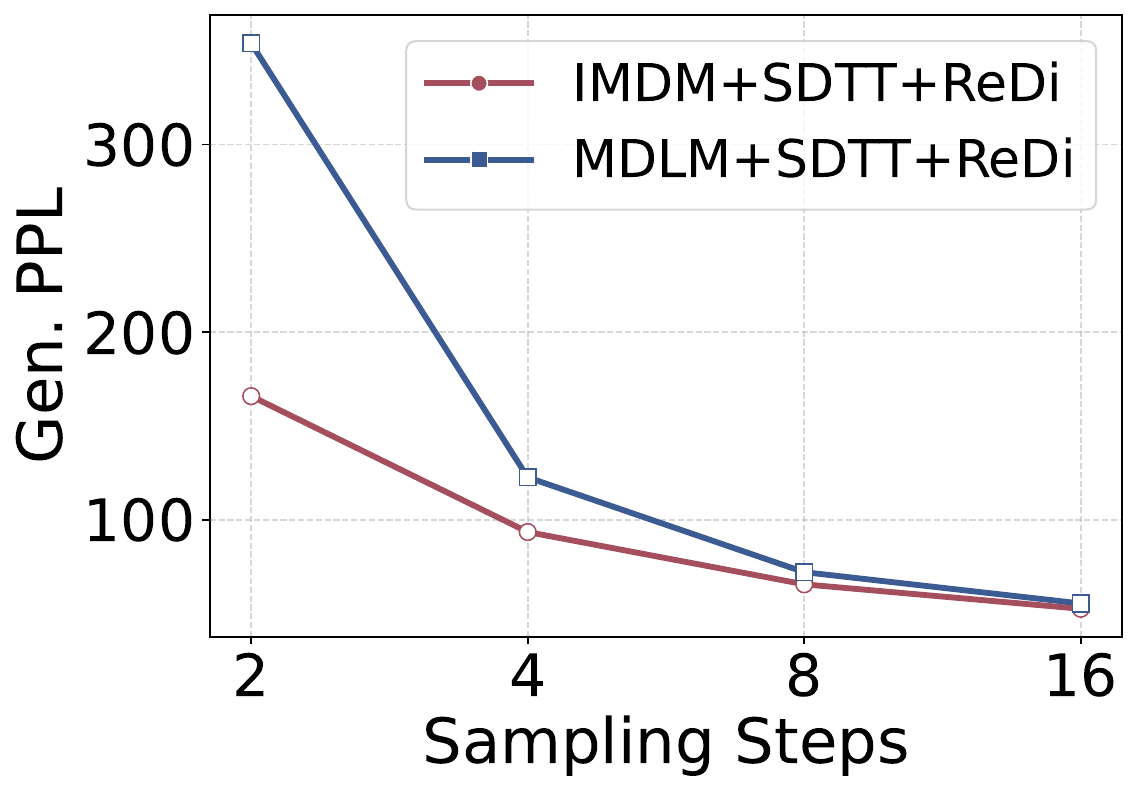}
        \end{subfigure}
        \caption{{Unconditional generation on LM1B. Each figure provides a comparison between MDLM and IMDM in various distillation methods. Lower generative perplexity (Gen. PPL) indicates more natural texts. We provide the exact values in \Cref{appx:full_exp_results}.}}
        \label{fig:lm1b_uncond_gen_ppl}
    \end{minipage}

    \vspace{0.5em}

    \begin{minipage}{\textwidth}
        \centering
        \begin{subfigure}{0.33\linewidth}
            \centering
            \includegraphics[width=1.0\linewidth]{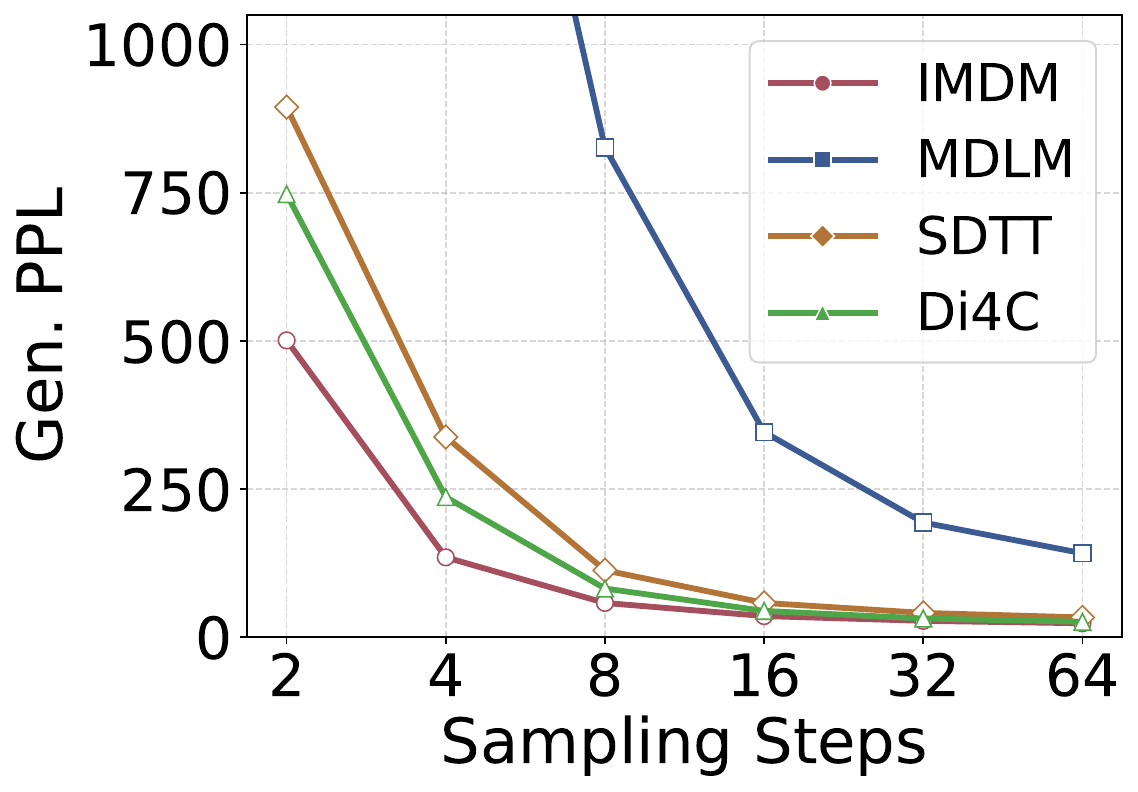}
        \end{subfigure}
        \hfill
        \begin{subfigure}{0.33\linewidth}
            \centering
            \includegraphics[width=1.0\linewidth]{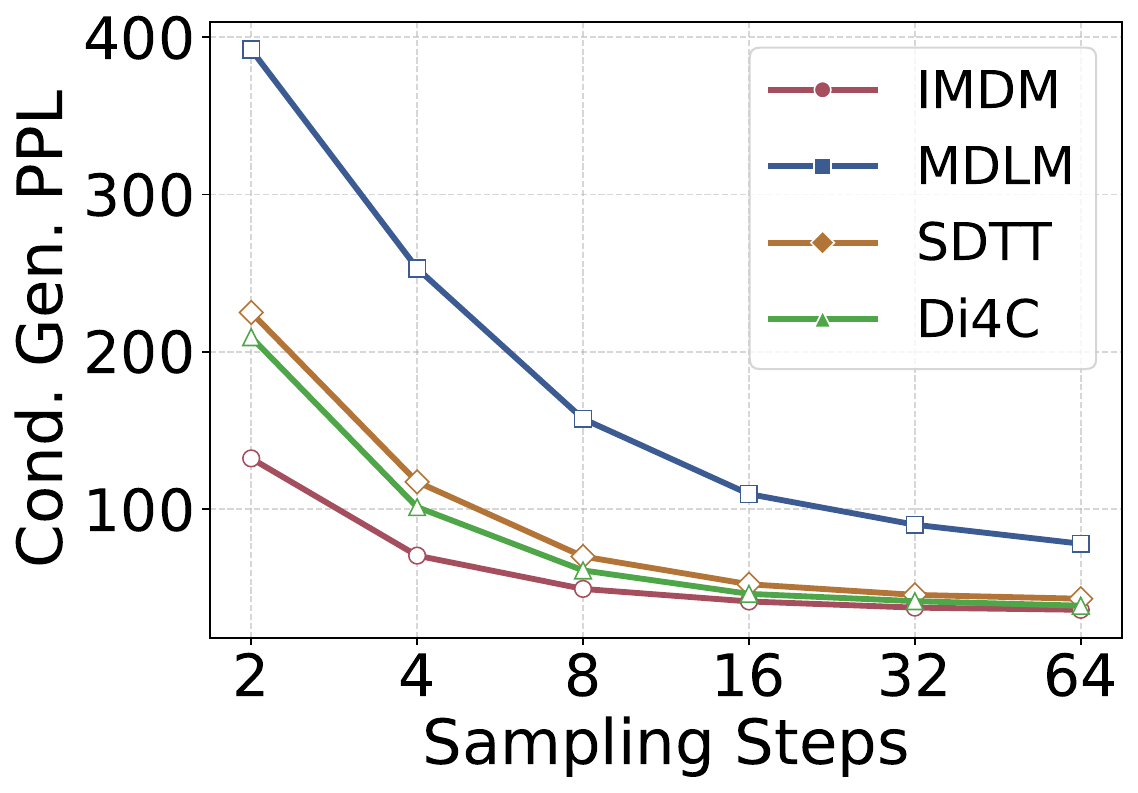}
        \end{subfigure}
        \hfill
        \begin{subfigure}{0.33\linewidth}
            \centering
            \includegraphics[width=1.0\linewidth]{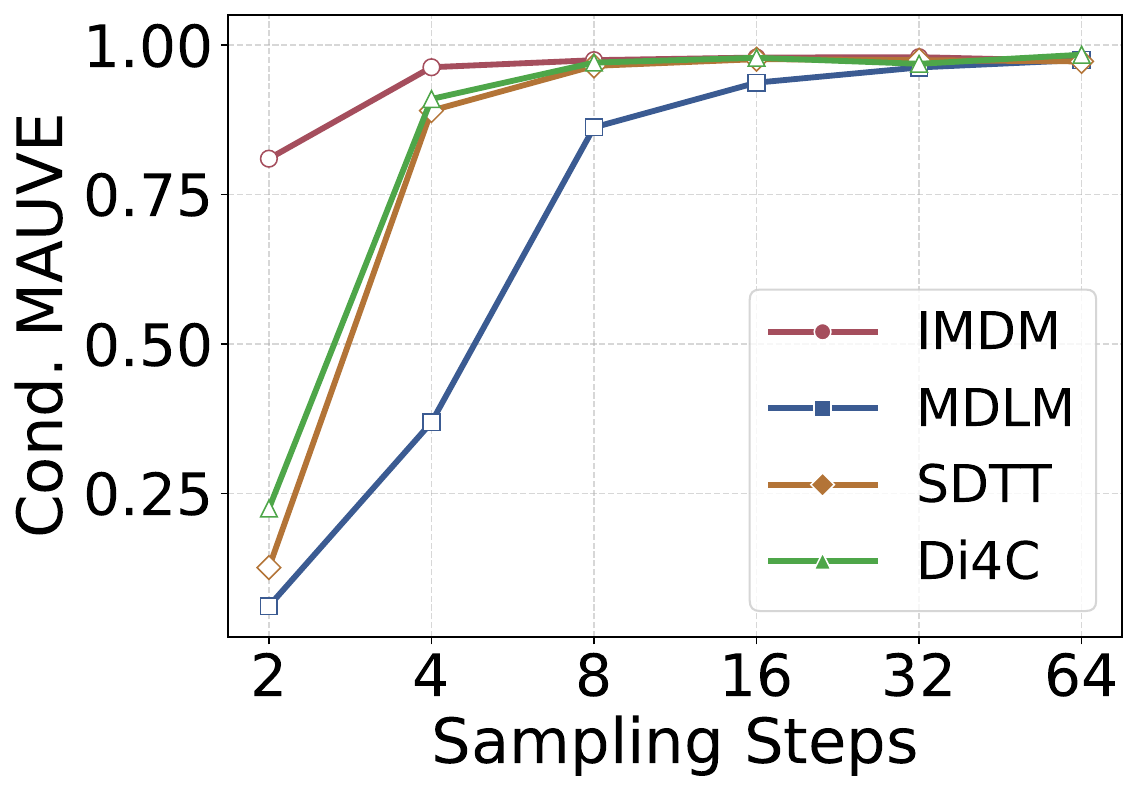}
        \end{subfigure}
        \caption{{Experiment results on OpenWebText. The left plot shows the unconditional generative perplexity over decoding steps. The middle and right panels show the conditional generative perplexity and MAUVE scores over decoding steps respectively. Lower generative perplexity and higher MAUVE value indicate better generation performance.
        With appropriate distillation methods, IMDM surpasses the distillation baselines on both unconditional and conditional generation in the few-step regime ($\leq 8$ steps), and remains competitive at larger step counts. We provide the exact values in \Cref{appx:full_exp_results}.}}
        \label{fig:owt_gen_scores}
    \end{minipage}
\end{figure*}

\cutsubsectionup
\subsection{Investigating IMDM on LM1B Dataset}
\label{sec:exp_lm1b}
To efficiently analyze and screen effective distillation strategies on language modeling, we conduct our design analysis on LM1B~\cite{lm1b}. As IMDM is structurally compatible with pre-trained MDMs, all MDM-based distillation strategies are directly applicable to IMDM.



\cutparagraphup
\paragraph{Distillation of IMDM}
While IMDM theoretically resolves the factorization error bound inherent to MDMs, its empirical advantages become most pronounced when coupled with effective distillation strategies.
To validate the empirical effectiveness of IMDM, we apply IMDM to two representative distillation methods, SDTT~\cite{SDTT} and ReDi~\cite{ReDi}, as well as their combination (SDTT+ReDi).

For SDTT, we train the student model to match the teacher's multi-step predictions sharing the same initial state $\mathbf{z}_t$ including the continuous noise for IMDM.
Following \citet{SDTT}, we perform 7 rounds of distillation, with each round consisting of 10,000 iterations.
For ReDi, we adopt the perturbed rectification strategy~\cite{ReDi} to construct a rectified coupling using data generated by the teacher.
The student model is then trained using the loss form of \Cref{eq:nelbo_imdm}, with the data distribution replaced by this rectified coupling.
We utilize 100,000 pairs for coupling construction and train for 50,000 iterations.
In the SDTT+ReDi setting, we apply ReDi to a student model pre-distilled via SDTT.

The unconditional generation results on LM1B are summarized in \Cref{fig:lm1b_uncond_gen_ppl}.
We observe that applying IMDM consistently improves performance across all distillation methods in the few-step regime ($\leq 4$ steps).
Notably, the performance gain is exceptionally large in ReDi-based methods compared to SDTT alone.
For instance, in the 2-step generation with ReDi, IMDM reduces the PPL by nearly 50\% compared to MDLM.
The gap narrows as the step count grows, which is consistent with the prediction of \Cref{thm:lower_bound} (\Cref{sec:MDM_error}) that the factorization-error floor scales with the step size and vanishes in the many-step limit. We therefore position IMDM as a targeted remedy for the few-step bottleneck.



\subsection{Experiments on OpenWebText Dataset}
\label{sec:exp_owt}

\begin{table*}[!t]
\centering
\footnotesize
\caption{Ablation study on the design of injected noise (Gen.PPL on LM1B with SDTT+ReDi). Each block varies one axis while holding the other two fixed at their defaults (Uniform / $d=768$ / scale $1.0$, marked in \textbf{bold} in the header). See \Cref{tab:noise_ablation_entropy} for the sample entropy.}
\label{tab:noise_ablation}
\begin{tabular}{c|cc|ccc|ccc}
\toprule
 & \multicolumn{2}{c|}{Distribution}
 & \multicolumn{3}{c|}{Dimension}
 & \multicolumn{3}{c}{Scale} \\
\cmidrule(lr){2-3} \cmidrule(lr){4-6} \cmidrule(lr){7-9}
Steps
 & \textbf{Uniform} & Gaussian
 & 256 & \textbf{768} & 1536
 & 0.5 & \textbf{1.0} & 2.0 \\
\midrule
\textbf{2}
 & 165.85 & 167.48
 & 294.41 & 165.85 & {163.83}
 & 179.19 & 165.85 & {164.23} \\
\textbf{4}
 & 93.44  & 95.82
 & 116.00 & {93.44}  & 96.47
 & 94.82  & {93.44}  & 94.60 \\
\textbf{8}
 & {65.55} & 67.98
 & 68.74  & {65.55} & 69.26
 & 65.87  & {65.55} & 66.10 \\
\textbf{16}
 & {52.51} & 56.26
 & 54.86  & {52.51} & 57.48
 & 52.99  & {52.51} & 53.88 \\
\bottomrule
\end{tabular}
\vspace{-1em}
\end{table*}

To evaluate the scalability and effectiveness of our method on large-scale benchmarks, we demonstrate IMDM on the OpenWebText dataset, covering both unconditional and conditional generation tasks.
Based on the observations in \Cref{sec:exp_lm1b}, we adopt the SDTT+ReDi variant of IMDM for comparison against strong baselines, including MDLM (Teacher), SDTT, and Di4C.
We utilize the official checkpoints for all baselines~\cite{MDLM, SDTT, hayakawa2024distillation}, specifically, the SDTT model distilled for 7 rounds and the Di4C model distilled for 2 rounds initialized from the 7-round SDTT checkpoint. 
Results on 860M-sized models are in \Cref{appx:large_owt_results}.

\paragraph{Unconditional Generation}
The unconditional generation results are presented in \Cref{fig:owt_gen_scores} (left).
IMDM outperforms both the teacher model (MDLM) and existing distillation baselines (SDTT, Di4C) in the few-step regime ($\leq 8$ steps).
Consistent with our findings in \Cref{sec:exp_lm1b}, the performance gap between IMDM and the baselines widens as the number of steps decreases.
This trend supports our hypothesis that IMDM effectively mitigates the irreducible factorization error bound of MDMs, which acts as a critical bottleneck in extreme few-step generation.

\paragraph{Conditional Generation}
The quantitative results for conditional generation are summarized in \Cref{fig:owt_gen_scores} (middle, right).
IMDM is particularly effective in the few-step regime ($\leq 8$ steps), achieving the lowest conditional Gen. PPL and the highest MAUVE among all evaluated methods. At larger step counts, the distillation baselines and IMDM converge to within a narrow band on both metrics, consistent with our analysis in \Cref{sec:MDM_error} that the factorization-error bound that IMDM is designed to remove vanishes as the step size shrinks.
Regarding the MAUVE scores (\Cref{fig:owt_gen_scores}, right), we observe that the score of IMDM converges efficiently; at 4 sampling steps, it achieves a score comparable to that of the teacher model utilizing 64 steps.
In contrast, the teacher model fails to generate meaningful text at 2 steps (MAUVE $\approx$ 0), whereas IMDM maintains a high quality (MAUVE $> 0.75$).
This indicates that IMDM not only inherits the conditional generation capabilities of MDMs but also successfully distills the MDM into a few-step generative model.

To further assess generation quality and diversity, we provide qualitative samples in \Cref{fig:owt_cond_gen_samples}.
As shown in the figure, IMDM produces coherent, grammatically correct continuations given a prefix prompt.
In particular, given the same crowdfunding prefix, the three samples diverge into distinct on-topic uses of the stated \$50,000 funding goal (propulsion-system development, control-system design, and imaging hardware), rather than collapsing onto a single completion.
Overall, these examples illustrate that IMDM produces diverse, on-topic conditional generation in the highly compressed few-step regime where the teacher MDM struggles to generate meaningful text.




\begin{figure}[t]
    \centering

    \begin{samplebox}{\normalfont\textbf{Sample 1 from the same prompt} \hfill \normalfont\scriptsize \textcolor{white}{Sampling Steps: 8}}
\scriptsize\ttfamily\linespread{0.85}\selectfont
\textcolor{red}{<|endoftext|> in space to image the plasma discharge and exhaust plume to look at the physics of how the plasma follows our new magnetic nozzle. (left - lab test setup, right - plasma!)}

\textcolor{red}{Our base funding goal of \$50,000 is} to journey to the development of the propulsion system, including the design of propulsion operations, and the fabrication process. Our scientists currently (with a leading team of physics experts named Albertano Torfa,) believe that our research can be a win-win ...
\end{samplebox}

\begin{samplebox}{\normalfont\textbf{Sample 2 from the same prompt} \hfill \normalfont\scriptsize \textcolor{white}{Sampling Steps: 8}}
\scriptsize\ttfamily\linespread{0.85}\selectfont
\textcolor{red}{<|endoftext|> in space to image the plasma discharge and exhaust plume to look at the physics of how the plasma follows our new magnetic nozzle. (left - lab test setup, right - plasma!)}

\textcolor{red}{Our base funding goal of \$50,000 is} to be able to develop a new control design and is therefore a direct project of the team. This control structure is necessary to control dynamics and houses the theoretical work of our new assistant professor, Giorgio Fabio. To do so, the flow ...
\end{samplebox}

\begin{samplebox}{\normalfont\textbf{Sample 3 from the same prompt} \hfill \normalfont\scriptsize \textcolor{white}{Sampling Steps: 8}}
\scriptsize\ttfamily\linespread{0.85}\selectfont
\textcolor{red}{<|endoftext|> in space to image the plasma discharge and exhaust plume to look at the physics of how the plasma follows our new magnetic nozzle. (left - lab test setup, right - plasma!)}

\textcolor{red}{Our base funding goal of \$50,000 is} to complete the imaging hardware (developed by asF Agency v. FASF) and the National Science Agency (via the Jet Propulsion Laboratory) funding.

There is a electrostatic force because the force causes strong, diffuse energy through physical ...
\end{samplebox}
\vspace{-0.5em}
\caption{8-step conditional generation results on OpenWebText dataset. We present the generated samples from IMDM when the prompt (red) is given. IMDM demonstrates feasible and diverse conditional generation results while fulfilling the context.}
\vspace{-2em}
\label{fig:owt_cond_gen_samples}
\end{figure}

\subsection{Ablation on Noise Design}
\label{sec:exp_ablation}

To examine the sensitivity of IMDM to the design of the injected noise, we conduct an ablation study on three axes of the noise specification. We use the LM1B dataset with the SDTT+ReDi distillation setup of \Cref{sec:exp_lm1b}. We vary the noise distribution, the noise dimensionality, and the noise scale, and report the resulting Gen. PPL across decoding step counts in \Cref{tab:noise_ablation}.
We also compare IMDM against its counterpart using a finite number of mask types in \Cref{appx:multimask_compare}.

As shown in \Cref{tab:noise_ablation}, IMDM is robust to the choice of noise distribution and scale, with Gen. PPL changing only marginally across these settings.
In contrast, noise dimensionality plays an important role: a dimension of 768 yields strong performance, while reducing it to 256 leads to a notable increase in 2-step Gen. PPL from 165.85 to 294.41. This pattern supports the interpretation of the injected noise as a mask category simulator, where the key requirement is sufficient distinctness across mask embeddings, a condition that is readily satisfied by an appropriate choice of dimension.
We additionally provide an OpenWebText counterpart of this dimension ablation in \Cref{appx:noise_dim_owt}, where we find that $d_{\text{noise}}=2048$ is sufficient.
\section{Conclusion}
In this paper, we introduce the Infinite Mask Diffusion Model (IMDM), which addresses the structural factorization-error lower bound that Masked Diffusion Models (MDMs) inherit from their deterministic single-state mask by incorporating a stochastic infinite-state mask, freeing IMDM from this bound while retaining the simplicity and effectiveness of standard MDMs and enabling more effective capture of token dependencies in fewer sampling steps.
We support this with both theoretical evidence showing that MDMs are bounded away from zero factorization error in few-step generation while IMDM is not, as well as empirical evidence consistent with this structural difference, as IMDM combined with appropriate distillation methods surpasses existing few-step distillation techniques on benchmarks like LM1B and OpenWebText at the small step counts where MDM's factorization-error bound is most severe.




\section*{Acknowledgments}
This work was in part supported by the National Research Foundation of Korea (RS-2024-00351212 and RS-2024-00436165), the Institute of Information \& communications Technology Planning \& Evaluation (IITP) (RS-2024-00509279, RS-2022-II220926, and RS-2022-II220959, RS-2019-II190075), and the High-Performance Computing Support Project funded by the Korea government (MSIT).

\section*{Impact Statement}
This paper presents work whose goal is to advance the field of Machine Learning, particularly in the context of improving diffusion models for few-step language generation.
While there are potential societal consequences related to the broader use of generative models, none are specifically highlighted here, as the ethical impacts of such models are well established in the field.

\bibliographystyle{icml2026}
\bibliography{main}

\newpage
\appendix
\onecolumn
\section*{Appendix}
\setcounter{footnote}{0}
\section{Proofs and Derivations}

\subsection{Proof of \Cref{thm:lower_bound}}
\label{appx:proof_LB}
We first restate \Cref{thm:lower_bound} and provide the proof.

Consider a scenario where the $i$-th token $\mathbf{z}_t^{i}$ and the $j$-th token $\mathbf{z}_t^{j}$ are masked at timestep $t$ and are simultaneously decoded (unmasked) at timestep $s$.
Let $p(e_{ij})$ denote the probability of this specific event occurring.

\begin{theorem}[Lower Bound of Factorization Error in MDMs]
Let $e_{ij}$ be the event where tokens at indices $i$ and $j$ are simultaneously decoded. Then, for any MDM with parameters $\theta$, the conditional total correlation is lower-bounded by their conditional mutual information weighted by the event probability:
\begin{equation}
    TC_\theta(\mathbf{z}_s|\mathbf{z}_t) \ge \max_{i,j} p(e_{ij}) \cdot I(\mathbf{z}_s^i ; \mathbf{z}_s^j \mid \mathbf{z}_t^{\setminus \{i,j\}}), \notag
\end{equation}
where $\mathbf{z}_t^{\setminus \{i,j\}}$ denotes the set of tokens at timestep $t$ excluding the tokens at indices $i$ and $j$.
\end{theorem}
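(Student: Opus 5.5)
Following the sketch in the main text, I will work in three steps: drop the model dependence, restrict to the event $e_{ij}$, and reduce the total correlation of the decoded block to a pairwise mutual information.

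\textbf{Step 1: remove $\theta$.} Fix $\mathbf{z}_t$ and write $q=\prod_\ell p_\theta(\mathbf{z}_s^\ell|\mathbf{z}_t)$ and $r=\prod_\ell p(\mathbf{z}_s^\ell|\mathbf{z}_t)$, the product of the true conditional marginals. The standard identity
\[
D_{KL}\big(p(\mathbf{z}_s|\mathbf{z}_t)\,\|\,q\big)=D_{KL}\big(p(\mathbf{z}_s|\mathbf{z}_t)\,\|\,r\big)+\sum_\ell D_{KL}\big(p(\mathbf{z}_s^\ell|\mathbf{z}_t)\,\|\,p_\theta(\mathbf{z}_s^\ell|\mathbf{z}_t)\big)
\]
holds because $\log(r/q)$ is a sum of single-coordinate functions, so its expectation under $p$ equals its expectation under $r$. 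Taking $\mathbb{E}_{\mathbf{z}_t}$ gives $TC_\theta\ge TC:=\mathbb{E}_{\mathbf{z}_t}\, TC(\mathbf{z}_s|\mathbf{z}_t=\cdot)$. The right-hand side no longer depends on $\theta$, which is why the bound holds for every MDM.

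\textbf{Step 2: restrict to $e_{ij}$.} Total correlation is nonnegative and decreases under marginalization to a subset of coordinates, since it equals the chain sum $\sum_\ell I(\mathbf{z}_s^\ell;\mathbf{z}_s^{<\ell}|\mathbf{z}_t)$. Hence for each $\mathbf{z}_t$,
\[
TC(\mathbf{z}_s|\mathbf{z}_t)\ge I(\mathbf{z}_s^i;\mathbf{z}_s^j\mid \mathbf{z}_t).
\]
In an MDM, whether $\mathbf{z}_t^i=\mathbf{z}_t^j=\mathbf{m}$ is a deterministic function of $\mathbf{z}_t$. Whether they are unmasked by time $s$ is not, but I will fold that randomness into $\mathbf{z}_s$. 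Averaging over $\mathbf{z}_t$ and keeping only the mass on configurations where both are masked gives
\[
TC\ge p(e_{ij})\, I(\mathbf{z}_s^i;\mathbf{z}_s^j\mid \mathbf{z}_t,e_{ij}).
\]
Here I define $e_{ij}$ as joint masking at $t$ together with simultaneous decoding, and restrict the conditional-mutual-information average to it. If conditioning on the decoding event inside $\mathbf{z}_s$ causes trouble, the fallback is to note that the event is a function of $(\mathbf{z}_s^i,\mathbf{z}_s^j)$, namely both differ from $\mathbf{m}$. Mutual information with the event included then dominates mutual information restricted to it, weighted by its conditional probability, and that weight supplies the factor $p(e_{ij})$.

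\textbf{Step 3: swap the conditioning.} I need to replace the conditioning on $\mathbf{z}_t$ by conditioning on $\mathbf{z}_t^{\setminus\{i,j\}}$. The general inequality is
\[
I(A;B\mid C,D)\ge I(A;B\mid C)-H(D\mid C),
\]
with $D=(\mathbf{z}_t^i,\mathbf{z}_t^j)$ and $C=\mathbf{z}_t^{\setminus\{i,j\}}$, all under $e_{ij}$. It follows from the chain rule:
\begin{align*}
I(A;B|C,D)&=I(A;B,D|C)-I(A;D|C)\\
&\ge I(A;B|C)-H(D|C).
\end{align*}
This yields exactly the bound $\mathrm{LB}(e_{ij})$ of \Cref{eq:LB_term}.

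Under $e_{ij}$ the MDM distinction property forces $D=(\mathbf{m},\mathbf{m})$ deterministically, so $H(D|C,e_{ij})=0$. This is the one place the single-state mask is used, and the whole argument hinges on it. In IMDM this entropy is infinite, which is consistent with \Cref{thm:imdm_lower_bound}. The bound then reads $p(e_{ij})\, I(\mathbf{z}_s^i;\mathbf{z}_s^j|\mathbf{z}_t^{\setminus\{i,j\}},e_{ij})$, which is the stated mutual information under the event (the statement suppresses $e_{ij}$). It holds for every pair, so taking the maximum over $i,j$ finishes the proof.

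\textbf{Main obstacle.} The delicate part is Step 2: making the conditioning on $e_{ij}$ rigorous when part of that event concerns $\mathbf{z}_s$ rather than $\mathbf{z}_t$. I would handle it through the event-function argument above, losing at most the factor $p(e_{ij})$.
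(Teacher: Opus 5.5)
Your proposal is correct and matches the paper's proof step for step: removing $\theta$ via the KL decomposition, bounding $TC\ge I(\mathbf{z}_s^i;\mathbf{z}_s^j\mid\mathbf{z}_t)$, restricting to $e_{ij}$, applying the chain-rule inequality (your conditional form is the paper's Lemma applied under the context $\mathbf{z}_t^{\setminus\{i,j\}}$), and using $H((\mathbf{z}_t^i,\mathbf{z}_t^j)\mid \mathbf{z}_t^{\setminus\{i,j\}},e_{ij})=0$ for the single mask; dropping the complement term by nonnegativity is, if anything, safer than the paper's claim that it vanishes. One precision on your fallback: it works only because the decoding event is the rectangle $\{\mathbf{z}_s^i\neq\mathbf{m}\}\cap\{\mathbf{z}_s^j\neq\mathbf{m}\}$, which gives $I(A;B)\ge I(A;B\mid g(A),h(B))\ge P(g=h=1)\,I(A;B\mid g=h=1)$, whereas for an arbitrary joint function of $(A,B)$ (e.g.\ the event $\{A=B\}$ with $A,B$ independent uniform bits) the claimed domination fails.
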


\begin{proof}
For any factorized model $\theta$, let $TC(\mathbf{z}_s|\mathbf{z}_t)=\mathbb{E}_{\mathbf{z}_t}[D_{KL}(p(\mathbf{z}_s|\mathbf{z}_t)\|\prod_\ell p(\mathbf{z}_s^\ell|\mathbf{z}_t))]$ be the data-only total correlation. By the decomposition $TC_\theta = TC + \sum_\ell \mathbb{E}[D_{KL}(p(\mathbf{z}_s^\ell|\mathbf{z}_t)\|p_\theta(\mathbf{z}_s^\ell|\mathbf{z}_t))] \ge TC$, it suffices to lower-bound $TC$. We first derive the following factorization error bound:
\begin{align}
    TC(\mathbf{z}_s|\mathbf{z}_t) &\ge \mathrm{LB}(e_{ij}), \text{ where} \notag \\
    \mathrm{LB}(e_{ij}) &= p(e_{ij}) \bigg[ 
        I(\mathbf{z}_s^i ; \mathbf{z}_s^j \mid \mathbf{z}_t^{\setminus \{i,j\}}, e_{ij}) \notag \\
        &\quad - H\left( (\mathbf{z}_t^i,\mathbf{z}_t^j) \mid \mathbf{z}_t^{\setminus\{i,j\}}, e_{ij} \right) 
    \bigg].  \notag
\end{align}

To facilitate the derivation of the factorization error bound, the following lemma is useful.
\begin{lemma}[Conditional Mutual Information Bound]
\label{lem:mi_bound}
For any random variables $A, B, C$, the conditional mutual information satisfies:
\begin{equation}
    I(A; B \mid C) \ge I(A; B) - H(C). \notag
\end{equation}
\end{lemma}
\begin{proof}
Using the chain rule of mutual information with properties $I(A; C \mid B) \ge 0$ and $I(A; C) \le H(C)$, we obtain:
\begin{align*}
    I(A;B|C) &= I(A;B) + I(A;C|B) - I(A;C)\\
    &\ge I(A;B)-H(C). \qedhere
\end{align*}
\end{proof}

We start with the definition of the total correlation as the KL divergence between the joint distribution and the product of marginals. Using the monotonicity of KL divergence (or data processing inequality), we can lower-bound the total correlation by the mutual information between any pair of tokens $i$ and $j$:
\begin{align}
    TC(\mathbf{z}_s|\mathbf{z}_t)
    &=\mathbb{E}_{\mathbf{z}_t}\left[D_{KL}\left(p(\mathbf{z}_s|\mathbf{z}_t) || \prod_{\ell=1}^L p(\mathbf{z}_s^\ell|\mathbf{z}_t)\right)\right] \notag \\
    &\ge \mathbb{E}_{\mathbf{z}_t}\left[D_{KL}\left(p(\mathbf{z}_s^i,\mathbf{z}_s^j|\mathbf{z}_t) || p(\mathbf{z}_s^i|\mathbf{z}_t)p(\mathbf{z}_s^j|\mathbf{z}_t)\right)\right] \notag \\
    &= I(\mathbf{z}_s^i;\mathbf{z}_s^j|\mathbf{z}_t). \notag
\end{align}
Now, we decompose this conditional mutual information based on the event $e_{ij}$ (both tokens masked) and its complement $e^c_{ij}$.
\begin{align}
    I(\mathbf{z}_s^i;\mathbf{z}_s^j|\mathbf{z}_t)=\text{ }& p(e_{ij})I(\mathbf{z}_s^i;\mathbf{z}_s^j|\mathbf{z}_t, e_{ij}) \notag \\
    &+ p(e^c_{ij})I(\mathbf{z}_s^i;\mathbf{z}_s^j|\mathbf{z}_t, e^c_{ij}). \label{eq:MI_factorize}
\end{align}
Under the complement event $e^c_{ij}$, at least one token is already unmasked at timestep $t$.
Since unmasked tokens are kept identical in MDMs (\Cref{eq:MDLM_reverse}), the mutual information term vanishes:
\begin{equation}
    I(\mathbf{z}_s^i;\mathbf{z}_s^j|\mathbf{z}_t, e^c_{ij}) = 0. \label{eq:unmasked_info}
\end{equation}
Substituting \Cref{eq:unmasked_info} into \Cref{eq:MI_factorize} and applying \Cref{lem:mi_bound}
with $C=(\mathbf{z}_t^i,\mathbf{z}_t^j)$ conditioned on the context $\mathbf{z}_t^{\setminus\{i,j\}}$, we obtain the following bound:
\begin{align}
    TC(\mathbf{z}_s|\mathbf{z}_t) &\ge p(e_{ij})I(\mathbf{z}_s^i;\mathbf{z}_s^j|\mathbf{z}_t, e_{ij}) \notag 
    &\ge p(e_{ij}) \left[
    I(\mathbf{z}_s^i ; \mathbf{z}_s^j|\mathbf{z}_t^{\setminus \{i,j\}},e_{ij})
    -H\left((\mathbf{z}_t^i,\mathbf{z}_t^j)|\mathbf{z}_t^{\setminus\{i,j\}},e_{ij}\right)
    \right]. \qedhere
\end{align}
\end{proof}

\subsection{Proof of \Cref{thm:imdm_lower_bound}}
We first restate \Cref{thm:imdm_lower_bound}, then provide the proof.
\label{appx:pf_optimal_coupling}

\begin{theorem}[Zero Factorization Error in IMDM]
There exist an IMDM forward process and parameters $\theta^*$ such that
\[
TC_{\theta^*}(\mathbf{z}_s|\mathbf{z}_t) = 0 \quad \text{for all } \mathbf{z}_s, \mathbf{z}_t \in \mathcal{Z}.
\]
\end{theorem}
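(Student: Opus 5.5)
The plan is a partition-and-map construction. We choose the forward process and the model jointly so that, conditioned on $\mathbf{z}_t$, the target $p(\mathbf{z}_s|\mathbf{z}_t)$ is a point mass. A point mass equals the product of its own marginals, and a factorized model can reproduce those marginals exactly, which gives zero factorization error.

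Recall that $TC_\theta = TC + \sum_\ell \mathbb{E}[D_{KL}(p(\mathbf{z}_s^\ell|\mathbf{z}_t)\|p_\theta(\mathbf{z}_s^\ell|\mathbf{z}_t))]$, as in the proof of \Cref{thm:lower_bound}. It therefore suffices to do two things. First, we exhibit an IMDM forward process (a coupling between $\mathbf{z}_t$ and the data) under which $TC(\mathbf{z}_s|\mathbf{z}_t)=0$. Second, we exhibit $\theta^*$ whose per-token predictions equal the true per-token conditionals.

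\textbf{Step 1: construct the coupling.} We keep the unmasking pattern of \Cref{eq:IMDM_simple}, meaning which positions lie in $\mathcal{M}$ and which are decoded between $t$ and $s$. We only choose how the identities of the mask states are coupled with the clean tokens. Fix the context $\mathbf{z}_t^{E^C}$ and let $E$ be the set of positions decoded in $(s,t]$. Let $\mu$ denote the uniform law of $\mathbf{z}_t^E$ on $\mathcal{M}^{|E|}$. Since $|\mathcal{M}|\to\infty$, $\mu$ is atomless in the limit; concretely, one can identify $\mathcal{M}$ with $[0,1)^d$ via the continuous $\epsilon$ of \Cref{eq:embedding}. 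Enumerate the finitely many configurations $v\in\mathcal{V}^{|E|}$ with target probabilities $p_v=p(\mathbf{z}_s^E=v\mid\mathbf{z}_t^{E^C},E)$. Partition $\mathcal{M}^{|E|}$ into measurable sets $A_v$ with $\mu(A_v)=p_v$, for instance by splitting $[0,1)$ along a single coordinate into consecutive intervals. Define $F(\mathbf{z}_t^E)=v$ for $\mathbf{z}_t^E\in A_v$.

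The coupling is then: draw $\mathbf{z}_t^E\sim\mu$ and set $\mathbf{z}_s^E=F(\mathbf{z}_t^E)$. Marginalizing over $\mathbf{z}_t^E$ gives $\mathbf{z}_s^E$ exactly the law $p_v$, so the marginal of the data is unchanged. The mask states still marginally have the uniform law $\boldsymbol{\pi}_{\mathcal{M}}$, so the result is a valid IMDM forward process. Positions that remain masked at $s$ can be given fresh uniform mask states independent of everything else. Positions already in $\mathcal{V}$ at $t$ are copied, by \Cref{eq:IMDM_reverse}.

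\textbf{Step 2: verify zero error.} Given the full $\mathbf{z}_t$, every coordinate of $\mathbf{z}_s$ is one of three things: a deterministic function of $\mathbf{z}_t$ (for copied or decoded positions), or an independent uniform mask draw. So $p(\mathbf{z}_s|\mathbf{z}_t)=\prod_\ell p(\mathbf{z}_s^\ell|\mathbf{z}_t)$ and $TC=0$. Now define $\theta^*$ by $p_{\theta^*}(\mathbf{z}_s^\ell|\mathbf{z}_t)=p(\mathbf{z}_s^\ell|\mathbf{z}_t)$, which for decoded positions means $\mathbf{x}_{\theta^*}(\mathbf{z}_t)^\ell=F(\mathbf{z}_t^E)^\ell$ as a one-hot vector. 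Both terms of the decomposition then vanish, giving $TC_{\theta^*}=0$ for every $\mathbf{z}_t$. This contrasts with the MDM case, where $\mathbf{z}_t^E$ is a constant, so the entropy term $H((\mathbf{z}_t^i,\mathbf{z}_t^j)\mid\cdot)$ in the bound of \Cref{thm:lower_bound} is zero. Here that entropy is unbounded, which is exactly what lets the bound be evaded.

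\textbf{Main obstacle: Step 1.} The construction needs exact partitions with $\mu(A_v)=p_v$ for arbitrary real $p_v$. These exist only in the $|\mathcal{M}|\to\infty$ (atomless) limit. For finite $M$, one only gets approximations with error $O(1/M)$ in total variation. I would handle this by taking the continuous-noise realization of $\mathcal{M}$ literally, so that exact partitions exist. A second, more technical issue is that $E$ itself is random, and the construction must be done for every $s<t$ consistently. To address this, I would prove the statement for a fixed pair $(s,t)$, which is all that \Cref{eq:distill} requires, rather than building a single coupling valid across all time pairs.
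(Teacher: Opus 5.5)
Your Step 1 is the paper's partition-and-map construction, but Step 2 does not follow from it. Given $\mathbf{z}_t$ alone, the decode set $E$ is not determined. Each position masked at $t$ is unmasked by $s$ independently with probability $(\alpha_s-\alpha_t)/(1-\alpha_t)$, and your mask values are drawn uniformly, independently of $E$. Because you build a separate map $F_{E,\mathbf{z}_t^{E^C}}$ for each $E$, the token a decoded position receives depends on which other positions were decoded. So $p(\mathbf{z}_s|\mathbf{z}_t)=\sum_E p(E|\mathbf{z}_t)\,p(\mathbf{z}_s|\mathbf{z}_t,E)$ is a mixture of product measures and in general not a product. Your claim that every coordinate of $\mathbf{z}_s$ is either a deterministic function of $\mathbf{z}_t$ or an independent mask draw is therefore false. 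Likewise, $\mathbf{x}_{\theta^*}(\mathbf{z}_t)^\ell=F(\mathbf{z}_t^E)^\ell$ is not even a function of $\mathbf{z}_t$.

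Here is a concrete failure on $\{00,11\}$ with both tokens masked. Let $F_{\{1,2\}}$ output $00$ or $11$ by splitting on $\mathbf{z}_t^1$, and let $F_{\{2\}}$ output $0$ or $1$ by splitting on $\mathbf{z}_t^2$. Take a $\mathbf{z}_t$ where the first split gives $00$ and the second gives $1$. Then token 2 becomes $0$ if token 1 is also decoded and $1$ otherwise. Whether $\mathbf{z}_s^1\in\mathcal{V}$ reveals if token 1 was decoded, so $\mathbf{z}_s^1$ and $\mathbf{z}_s^2$ are dependent given $\mathbf{z}_t$ and $TC>0$. Restricting to a fixed pair $(s,t)$, as you propose, does not remove this randomness.

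The paper avoids the issue by working conditionally on the decode event. It computes $D_{KL}(p(\mathbf{z}_s|\mathbf{z}_t,e_E)\,\|\,\prod_\ell p_{\theta^*}(\mathbf{z}_s^\ell|\mathbf{z}_t,e_E))$ for each $E$ and only then averages over $\mathbf{z}_t$ and $e_E$. Its lookup table $\theta^*$ is thus allowed to see $E$, and with that conditioning your Step 2 goes through as written.

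If you want the unconditioned quantity of \Cref{eq:TC}, make the map independent of $E$ by coupling the clean sequence itself to $\mathbf{z}_t$:
\begin{itemize}
\item Choose one map $G$, copying the unmasked positions, that sends the mask values at all positions masked at $t$ to $\mathcal{V}$-sequences, with pushforward $p(\mathbf{x}^{\mathrm{masked}}\mid\mathbf{x}^{\mathrm{unmasked}})$.
\item Set $\mathbf{x}=G(\mathbf{z}_t)$.
\item Let positions unmask independently via \Cref{eq:IMDM_reverse}.
\end{itemize}
Then $p(\mathbf{z}_s|\mathbf{z}_t)=\prod_\ell q(\mathbf{z}_s^\ell|\mathbf{z}_t^\ell,G(\mathbf{z}_t)^\ell)$ is a genuine product. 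The one-hot choice $\mathbf{x}_{\theta^*}=G$ reproduces it within the standard parameterization. The same coupling also works for every $s<t$ simultaneously, which settles your second technical concern as well.
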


\begin{proof}
Let $e_E$ denote the event that the token indices in $E$ are updated between $t$ and $s$ (\ie decoded/unmasked),
and the indices in $E^C$ are carried over. Under $e_E$, we have $\mathbf{z}_s^{E^C}=\mathbf{z}_t^{E^C}$ deterministically.
Hence, conditioning on $e_E$ reduces the KL term to the updated subset:
\begin{align}
& D_{KL}\!\left(p(\mathbf{z}_s|\mathbf{z}_t,e_E)\ \Big\|\ \prod_{\ell=1}^L p_{\theta}(\mathbf{z}_s^\ell|\mathbf{z}_t,e_E)\right) \notag\\
&= D_{KL}\!\left(p(\mathbf{z}_s^E|\mathbf{z}_t,e_E)\ \Big\|\ \prod_{\ell\in E} p_{\theta}(\mathbf{z}_s^\ell|\mathbf{z}_t,e_E)\right).
\label{eq:reduce_to_E}
\end{align}

Now we construct an IMDM forward process such that
$p(\mathbf{z}_s^E|\mathbf{z}_t,e_E)$ is deterministic given $\mathbf{z}_t$.
Since in IMDM the masked states live in an (effectively) infinite set $\mathcal{M}$, the random vector
$\mathbf{z}_t^E \in \mathcal{M}^{|E|}$ can provide arbitrarily rich randomness.
Therefore, for each fixed $(\mathbf{z}_t^{E^C},e_E)$, we can define a deterministic map
\[
F_{E,\mathbf{z}_t^{E^C}}:\ \mathcal{M}^{|E|}\to \mathcal{V}^{|E|}
\]
such that the pushforward of $\mathbf{z}_t^E$ matches the desired target conditional:
\[
\mathbf{z}_s^E := F_{E,\mathbf{z}_t^{E^C}}(\mathbf{z}_t^E)
\quad\Longrightarrow\quad
p(\mathbf{z}_s^E|\mathbf{z}_t,e_E)=\delta_{F_{E,\mathbf{z}_t^{E^C}}(\mathbf{z}_t^E)}(\mathbf{z}_s^E).
\]
(Existence of such $F$ follows from the fact that $\mathcal{M}^{|E|}$ has sufficiently large support; \eg
when implementing $\mathcal{M}$ via continuous noise, one can construct $F$ by partitioning the noise space
according to the target probabilities.)

Define $\theta^*$ as the lookup-table that realizes this deterministic conditional:
for $\ell\in E$,
\[
p_{\theta^*}(\mathbf{z}_s^\ell|\mathbf{z}_t,e_E)
:=\delta_{[F_{E,\mathbf{z}_t^{E^C}}(\mathbf{z}_t^E)]_\ell}(\mathbf{z}_s^\ell),
\]
and for $\ell\in E^C$ (carry-over),
\[
p_{\theta^*}(\mathbf{z}_s^\ell|\mathbf{z}_t,e_E):=\delta_{\mathbf{z}_t^\ell}(\mathbf{z}_s^\ell).
\]
Then, by construction,
\[
p(\mathbf{z}_s^E|\mathbf{z}_t,e_E)=\prod_{\ell\in E}p_{\theta^*}(\mathbf{z}_s^\ell|\mathbf{z}_t,e_E),
\]
so the KL term in \eqref{eq:reduce_to_E} is exactly zero for every $(\mathbf{z}_t,e_E)$.
Taking expectation over $\mathbf{z}_t$ (and $e_E$) gives the claim.
\end{proof}

\paragraph{Partition-and-Map Mechanism}
The proof above achieves zero factorization error through a partition-and-map mechanism. Let $E$ denote the set of indices that are masked at timestep $t$ and unmasked between $t$ and $s$. Conditioned on the unmasked context $\mathbf{z}_t^{E^C}$, the construction defines the deterministic map $F:\mathcal{M}^{|E|}\to\mathcal{V}^{|E|}$ that takes the noise $\mathbf{z}_t^E$ as input and outputs a token sequence in $\mathcal{V}^{|E|}$. Since $|\mathcal{M}|\to\infty$, the noise $\mathbf{z}_t^E$ admits arbitrarily fine partitions whose subset probabilities match those of $p(\mathbf{z}_s^E|\mathbf{z}_t)$, and $F$ deterministically assigns each subset to a valid token sequence. Marginalizing over $\mathbf{z}_t^E$ then recovers the true conditional distribution, yielding zero factorization error. In standard MDMs, all masked positions receive the same deterministic mask token, so $\mathbf{z}_t^E$ takes a single fixed value. With no variation in $\mathbf{z}_t^E$, the model has no set to partition and the factorized predictions are restricted to independent marginals, which is the structural lower bound formalized in \Cref{thm:lower_bound}.

\subsection{Derivation of Reverse Transition Kernel and NELBO of IMDM}
\label{appx:IMDM_details}
\paragraph{Derivation of Reverse Transition Kernel of IMDM}
We derive the reverse transition kernel by substituting $\boldsymbol{\pi}={\boldsymbol{\pi}_\mathcal{M}}$ into \Cref{eq:posterior}.
We will consider two cases: (1) $\mathbf{z}_t\in\mathcal{V}$ and (2) $\mathbf{z}_t\in\mathcal{M}$.

(1) $\mathbf{z}_t\in\mathcal{V}$

In this case, we can use the following properties: $\langle \mathbf{z}_t, \boldsymbol{\pi}_\mathcal{M} \rangle=0$, $\langle \mathbf{z}_t, \mathbf{x} \rangle=1$, $\mathbf{z}_t \odot\boldsymbol{\pi}_\mathcal{M}=\mathbf{0}$.
And also, $\mathbf{z}_t  \odot \mathbf{x}=\mathbf{z}_t$ as the forward process only allows $\mathbf{z}_t=\mathbf{x}$ if $\mathbf{z}_t \in \mathcal{V}$.
\begin{align*}
    q(\mathbf{z}_s | \mathbf{z}_t, \mathbf{x}) &= \mathrm{Cat}
    \left( \mathbf{z}_s; \frac{[\alpha_{t|s}\mathbf{z}_t + (1 - \alpha_{t|s})\mathbf{1}\boldsymbol{\pi}_\mathcal{M}^\top \mathbf{z}_t] \odot [\alpha_s \mathbf{x} + (1 - \alpha_s)\boldsymbol{\pi}_\mathcal{M}]}{\alpha_t \mathbf{z}_t^\top \mathbf{x} + (1 - \alpha_t)\mathbf{z}_t^\top \boldsymbol{\pi}_\mathcal{M}} \right) \\
    &=\mathrm{Cat} \left( \mathbf{z}_s; \frac{
    \alpha_t \mathbf{z}_t \odot \mathbf{x} + (\alpha_{t|s})(1-\alpha_s)\mathbf{z}_t \odot\boldsymbol{\pi}_\mathcal{M}
    }
    {\alpha_t \mathbf{z}_t^\top \mathbf{x}}
    \right) \\
    &=\mathrm{Cat} \left( \mathbf{z}_s;\mathbf{z}_t\right).
\end{align*}
(2) $\mathbf{z}_t\in\mathcal{M}$

In this case, we can use the following properties: $\langle \mathbf{z}_t, \boldsymbol{\pi}_\mathcal{M} \rangle=\frac{1}{M}$, $\langle \mathbf{z}_t, \mathbf{x} \rangle=0$, $\mathbf{z}_t \odot\boldsymbol{\pi}_\mathcal{M}=\frac{1}{M}\mathbf{z}_t$, $\mathbf{z}_t \odot \mathbf{x}=\mathbf{0}$.
\begin{align*}
    q(\mathbf{z}_s | \mathbf{z}_t, \mathbf{x}) &= \mathrm{Cat}
    \left( \mathbf{z}_s; \frac{[\alpha_{t|s}\mathbf{z}_t + (1 - \alpha_{t|s})\mathbf{1}\boldsymbol{\pi}_\mathcal{M}^\top \mathbf{z}_t] \odot [\alpha_s \mathbf{x} + (1 - \alpha_s)\boldsymbol{\pi}_\mathcal{M}]}{\alpha_t \mathbf{z}_t^\top \mathbf{x} + (1 - \alpha_t)\mathbf{z}_t^\top \boldsymbol{\pi}_\mathcal{M}} \right) \\
    &=\mathrm{Cat} \left( \mathbf{z}_s; \frac{
    \frac{\alpha_{t|s}(1-\alpha_s)}{M}\mathbf{z}_t + \frac{(1-\alpha_{t|s})\alpha_s}{M}\mathbf{x} + \frac{(1-\alpha_{t|s})(1-\alpha_s)}{M}\boldsymbol{\pi}_\mathcal{M}
    }
    {(1-\alpha_t)/M}
    \right) \\
    &=\mathrm{Cat} \left( \mathbf{z}_s; \frac{
    {\alpha_{t|s}(1-\alpha_s)}\mathbf{z}_t + {(1-\alpha_{t|s})\alpha_s}\mathbf{x} + (1-\alpha_{t|s})(1-\alpha_s)\boldsymbol{\pi}_\mathcal{M}
    }
    {1-\alpha_t}\right)\\
    &=\mathrm{Cat} \left( \mathbf{z}_s; \frac{
    (1-\alpha_s)(\alpha_{t|s}\mathbf{z}_t+(1-\alpha_{t|s})\boldsymbol{\pi}_\mathcal{M})+(\alpha_s-\alpha_t)\mathbf{x}
    }
    {1-\alpha_t}\right)
\end{align*}

Therefore,
\begin{equation*}
    q(\mathbf{z}_s|\mathbf{z}_t,\mathbf{x})=\begin{cases}
        \mathrm{Cat}(\mathbf{z}_s;\mathbf{z}_t) & \mathbf{z}_t\in\mathcal{V},\\
    \mathrm{Cat}(\mathbf{z}_s;\frac{(1-\alpha_s)\boldsymbol{\pi}'_{\mathcal{M}}+(\alpha_s-\alpha_t)\mathbf{x}}{1-\alpha_t}) &\mathbf{z}_t\in\mathcal{M},
    \end{cases}
\end{equation*}
where $\boldsymbol{\pi}'_{\mathcal{M}}=\alpha_{t|s}\mathbf{z}_t+(1-\alpha_{t|s})\boldsymbol{\pi}_\mathcal{M}$.

\paragraph{Derivation of Rao-Blackwellized NELBO for IMDM}
We begin by utilizing the Rao-Blackwellized NELBO of uniform discrete diffusion~\cite{DUO, UDLM}.
\begin{align*}
    \mathcal{L}_{IMDM}=\mathbb{E}_{\mathbf{z}_t,t}\left[-\frac{\alpha'_t}{K\alpha_t}\left(
    \frac{K}{\bar{\mathbf{x}}_i}-\frac{K}{(\bar{\mathbf{x}}_\theta)_i}-\sum_j \frac{}{}\log\frac{(\bar{\mathbf{x}}_\theta)_i \cdot \bar{\mathbf{x}}_j}{(\bar{\mathbf{x}}_\theta)_j \cdot \bar{\mathbf{x}}_i}
    \right)\right],
\end{align*}
where the subscript $j$ denotes the $j$-th index of a vector, $\bar{\mathbf{x}} = K \alpha_t \mathbf{x} + (1 - \alpha_t) \mathbf{1}$, $\bar{\mathbf{x}}_\theta = K \alpha_t \mathbf{x}_\theta + (1 - \alpha_t) \mathbf{1}$, $\alpha_t'=\frac{\partial \alpha_t}{\partial t}$, and $i = \arg\max_{j} (\mathbf{z}_t)_j$ denotes the category of $\mathbf{z}_t$.

We denote the term inside expectation as $f(\mathbf{z}_t,\mathbf{x}_\theta,\alpha_t,\mathbf{x})$, and will derive the term in two cases: (1) $\mathbf{z}_t=\mathbf{x}$, (2) $\mathbf{z}_t\in\mathcal{M}$.

(1) $\mathbf{z}_t=\mathbf{x}$

In this case, as the reverse transition kernel allows to consider $\mathbf{x}_\theta=\mathbf{x}$, $f(\mathbf{z}_t,\mathbf{x}_\theta,\alpha_t,\mathbf{x})=0$.

(2) $\mathbf{z}_t\in\mathcal{M}$
In this case, we can utilize the following properties: $\bar{\mathbf{x}}_i=1-\alpha_t$, $({\mathbf{x}}_\theta)_i=0$, $(\bar{\mathbf{x}}_\theta)_i=1-\alpha_t$.
Then,
\begin{align*}
    f(\mathbf{z}_t,\mathbf{x}_\theta,\alpha_t,\mathbf{x}) &= -\frac{\alpha'_t}{K\alpha_t}\left(
    \frac{K}{\bar{\mathbf{x}}_i}-\frac{K}{(\bar{\mathbf{x}}_\theta)_i}-\sum_j \frac{}{}\log\frac{(\bar{\mathbf{x}}_\theta)_i \cdot \bar{\mathbf{x}}_j}{(\bar{\mathbf{x}}_\theta)_j \cdot \bar{\mathbf{x}}_i}
    \right)\\
    &= -\frac{\alpha'_t}{\alpha_t}\left(
    \frac{1}{\bar{\mathbf{x}}_i}-\frac{1}{(\bar{\mathbf{x}}_\theta)_i}-\sum_j \frac{}{}\log\frac{(\bar{\mathbf{x}}_\theta)_i \cdot \bar{\mathbf{x}}_j}{K(\bar{\mathbf{x}}_\theta)_j \cdot \bar{\mathbf{x}}_i}
    \right)\\
    &= -\frac{\alpha'_t}{\alpha_t}\left(
    \frac{1}{1-\alpha_t}-\frac{1}{1-\alpha_t}-\sum_j \log\frac{(1-\alpha_t) \cdot \bar{\mathbf{x}}_j}{K(\bar{\mathbf{x}}_\theta)_j \cdot (1-\alpha_t)}    
    \right)\\
    &= \frac{\alpha'_t}{\alpha_t}\left(\sum_j \log\frac{\bar{\mathbf{x}}_j}{K(\bar{\mathbf{x}}_\theta)_j}    
    \right)\\
    &= \frac{\alpha'_t}{\alpha_t}\left(\sum_j\log\frac{K\alpha_t{\mathbf{x}}_j + (1-\alpha_t)}{K({\mathbf{x}}_\theta)_j + (1-\alpha_t)}
    \right)
\end{align*}
By taking $M\to\infty$ (\ie $K\to\infty$),
\begin{equation}
    f(\mathbf{z}_t,\mathbf{x}_\theta,\alpha_t,\mathbf{x}) = \frac{\alpha_t'}{1-\alpha_t}  \log(\langle \mathbf{x}_\theta(\mathbf{z}_t, t), \mathbf{x} \rangle). \notag
\end{equation}

By combining case(1) and case(2), we obtain the following Rao-Blackwellized NELBO:
\begin{equation*}
    \mathcal{L}_{IMDM}=\mathbb{E}_{\mathbf{z}_t,t}\left[\frac{\alpha_t'}{1-\alpha_t}  \log(\langle \mathbf{x}_\theta(\mathbf{z}_t, t), \mathbf{x} \rangle)\right].
\end{equation*}

\section{Implementation Details}
\label{appx:exp_details}
In this section, we provide detailed training configuration used in \Cref{sec:exp_lm1b,sec:exp_owt}.

\subsection{IMDM Implementation and Finetuning}
As introduced in \Cref{sec:IMDM}, we use an MLP architecture for implementing the mask embedding. The MLP architecture consists of two linear layers with a GeLU activation function in between. Specifically, the input dimension $d_{noise}$ is first projected to a hidden dimension $4 \times d_{model}$ and then projected back to $d_{model}$. The noise dimension $d_{noise}$ is set to 768 for LM1B and 2048 for OpenWebText.

\subsection{Hyperparameters}

\paragraph{Language Modeling}
For finetuning MDLM~\cite{MDLM} to IMDM with SDTT~\cite{SDTT}, we employ a learning rate of $6 \times 10^{-5}$ and an EMA decay of 0.9999, with a linear warmup over the first 500 steps. The global batch size is 32 for LM1B and 128 for OpenWebText.

For training ReDi~\cite{ReDi} based on IMDM with SDTT, we use a global batch size of 512, a learning rate of $3 \times 10^{-4}$, and an EMA decay of 0.9999. For the experiments in the main paper, the models are trained with 50,000 steps, while applying a linear warmup over the first 2500 steps. For the experiments in \Cref{appx:large_owt_results}, the models are trained with 30,000 steps.





\section{Additional Results}
\label{appx:add_results}

\subsection{Full Experiment Results}
\label{appx:full_exp_results}
In this section, we provide the exact values used in the plots in \Cref{sec:experiments}.

\begin{table*}[h!]
    \centering
    \caption{Unconditional Gen. PPL of samples generated by models on LM1B.}
    \begin{tabular}{c|c|c|c|c|c|c}
    \toprule
     & \multicolumn{2}{c|}{SDTT}
     & \multicolumn{2}{c|}{ReDi}
     & \multicolumn{2}{c}{SDTT + ReDi} \\
     & MDLM & IMDM
     & MDLM & IMDM
     & MDLM & IMDM \\
     \midrule
     \textbf{2} & 584.22 & 524.31 & 530.36 & 284.62 & 353.90 & 165.85 \\
     \textbf{4} & 227.08 & 210.05 & 195.39 & 151.89 & 122.56 & 93.44 \\
     \textbf{8} & 126.07 & 122.52 & 113.14 & 104.93 & 72.04 & 65.55 \\
     \textbf{16} & 93.29 & 90.16 & 85.27 & 81.57 & 55.36 & 52.51 \\
     \bottomrule
    \end{tabular}
    \label{tab:lm1b_uncond_gen_ppl}
\end{table*}

\begin{table*}[h!]
    \centering
    \caption{Unconditional entropy of samples generated by models on LM1B.}
    \begin{tabular}{c|c|c|c|c|c|c}
    \toprule
     & \multicolumn{2}{c|}{SDTT}
     & \multicolumn{2}{c|}{ReDi}
     & \multicolumn{2}{c}{SDTT + ReDi} \\
     & MDLM & IMDM
     & MDLM & IMDM
     & MDLM & IMDM \\
     \midrule
     \textbf{2} & 4.25 & 4.24 & 4.29 & 4.24 & 4.25 & 4.15 \\
     \textbf{4} & 4.25 & 4.24 & 4.27 & 4.25 & 4.22 & 4.19 \\
     \textbf{8} & 4.26 & 4.25 & 4.27 & 4.26 & 4.22 & 4.20 \\
     \textbf{16} & 4.27 & 4.25 & 4.26 & 4.26 & 4.21 & 4.20 \\
     \bottomrule
    \end{tabular}
    \label{tab:lm1b_uncond_entropy}
\end{table*}

\begin{table*}[h!]
    \centering
    \caption{Unconditional generation results of models on OpenWebText.}
    \begin{tabular}{c|cc|cc|cc|cc}
    \toprule
     & \multicolumn{2}{c|}{MDLM} & \multicolumn{2}{c|}{SDTT} & \multicolumn{2}{c|}{Di4C} & \multicolumn{2}{c}{IMDM} \\
     \cmidrule(lr){2-3} \cmidrule(lr){4-5} \cmidrule(lr){6-7} \cmidrule(lr){8-9}
     & Gen.PPL & Entropy & Gen.PPL & Entropy & Gen.PPL & Entropy & Gen.PPL & Entropy \\
     \midrule
     \textbf{2}  & 3338.21 & 5.93 & 877.22 & 5.34 & 747.66 & 5.34 & 500.84 & 5.46 \\
     \textbf{4}  & 2009.22 & 5.96 & 339.73 & 5.38 & 236.27 & 5.39 & 134.58 & 5.42 \\
     \textbf{8}  & 826.42  & 5.89 & 112.66 & 5.41 & 82.00  & 5.41 & 57.63  & 5.36 \\
     \textbf{16} & 346.04  & 5.80 & 57.74  & 5.39 & 44.12  & 5.37 & 35.48  & 5.30 \\
     \textbf{32} & 193.37  & 5.73 & 40.41  & 5.34 & 31.38  & 5.32 & 27.41  & 5.24 \\
     \textbf{64} & 141.43  & 5.69 & 33.21  & 5.30 & 25.80  & 5.27 & 23.31  & 5.18 \\
     \bottomrule
    \end{tabular}%
    \label{tab:owt_uncond_combined}
\end{table*}

\begin{table*}[h!]
    \centering
    \caption{Conditional generation results of models on OpenWebText.}
    \resizebox{\linewidth}{!}{%
    \begin{tabular}{c|ccc|ccc|ccc|ccc}
    \toprule
     & \multicolumn{3}{c|}{MDLM} & \multicolumn{3}{c|}{SDTT} & \multicolumn{3}{c|}{Di4C} & \multicolumn{3}{c}{IMDM} \\
     \cmidrule(lr){2-4} \cmidrule(lr){5-7} \cmidrule(lr){8-10} \cmidrule(lr){11-13}
     & Gen.PPL & Entropy & MAUVE & Gen.PPL & Entropy & MAUVE & Gen.PPL & Entropy & MAUVE & Gen.PPL & Entropy & MAUVE \\
     \midrule
     \textbf{2}  & 392.12 & 4.16 & 0.062 & 222.82 & 4.09 & 0.201 & 209.31 & 4.09 & 0.225 & 132.26 & 4.06 & 0.810 \\
     \textbf{4}  & 253.02 & 4.15 & 0.369 & 117.19 & 4.08 & 0.864 & 101.27 & 4.08 & 0.910 & 70.45  & 4.05 & 0.963 \\
     \textbf{8}  & 157.39 & 4.15 & 0.862 & 69.78  & 4.08 & 0.965 & 61.05  & 4.08 & 0.970 & 49.28  & 4.04 & 0.974 \\
     \textbf{16} & 109.59 & 4.14 & 0.937 & 52.60  & 4.08 & 0.977 & 46.14  & 4.06 & 0.978 & 41.31  & 4.03 & 0.979 \\
     \textbf{32} & 90.09  & 4.14 & 0.962 & 45.99  & 4.07 & 0.970 & 41.47  & 4.06 & 0.968 & 37.38  & 4.03 & 0.979 \\
     \textbf{64} & 78.00  & 4.13 & 0.975 & 42.48  & 4.07 & 0.969 & 38.60  & 4.06 & 0.984 & 36.00  & 4.02 & 0.973 \\
     \bottomrule
    \end{tabular}%
    }
    \label{tab:owt_cond_combined}
\end{table*}

\Cref{tab:noise_ablation_entropy} reports the sample entropy for each configuration in \Cref{tab:noise_ablation}.

\begin{table*}[!t]
\centering
\footnotesize
\caption{Ablation study on the design of injected noise (Entropy on LM1B with SDTT+ReDi). Each block varies one axis while holding the other two fixed at their defaults (Uniform / $d=768$ / scale $1.0$, marked in \textbf{bold} in the header).}
\label{tab:noise_ablation_entropy}
\begin{tabular}{c|cc|ccc|ccc}
\toprule
 & \multicolumn{2}{c|}{Distribution}
 & \multicolumn{3}{c|}{Dimension}
 & \multicolumn{3}{c}{Scale} \\
\cmidrule(lr){2-3} \cmidrule(lr){4-6} \cmidrule(lr){7-9}
Steps
 & \textbf{Uniform} & Gaussian
 & 256 & \textbf{768} & 1536
 & 0.5 & \textbf{1.0} & 2.0 \\
\midrule
\textbf{2}
 & 4.15 & 4.15
 & 4.20 & 4.15 & 4.15
 & 4.15 & 4.15 & 4.14 \\
\textbf{4}
 & 4.19 & 4.19
 & 4.19 & 4.19 & 4.19
 & 4.18 & 4.19 & 4.18 \\
\textbf{8}
 & 4.20 & 4.20
 & 4.20 & 4.20 & 4.21
 & 4.19 & 4.20 & 4.19 \\
\textbf{16}
 & 4.20 & 4.21
 & 4.20 & 4.20 & 4.20
 & 4.19 & 4.20 & 4.20 \\
\bottomrule
\end{tabular}
\end{table*}

\subsection{Scaling IMDM to 860M Parameters}
\label{appx:large_owt_results}
In \Cref{fig:owt_large_gen_scores,tab:owt_large_uncond_combined,tab:owt_large_cond_combined}, we further demonstrate the scalability of IMDM by finetuning from the 860M MDLM checkpoint trained for 400K steps provided by \citet{SDTT}. The results confirm that trends observed in the smaller model consistently carry over at this scale. With appropriate distillation, IMDM outperforms all distillation baselines in both unconditional and conditional generation within the few-step regime ($\leq 8$ steps), while remaining competitive at higher step counts. Values in parentheses are entropy-matched, confirming that IMDM's gains reflect genuine improvements in generation quality rather than a perplexity–entropy trade-off.

\begin{figure}[h!]
    \centering
    \begin{subfigure}{0.33\linewidth}
        \centering
        \includegraphics[width=1.0\linewidth]{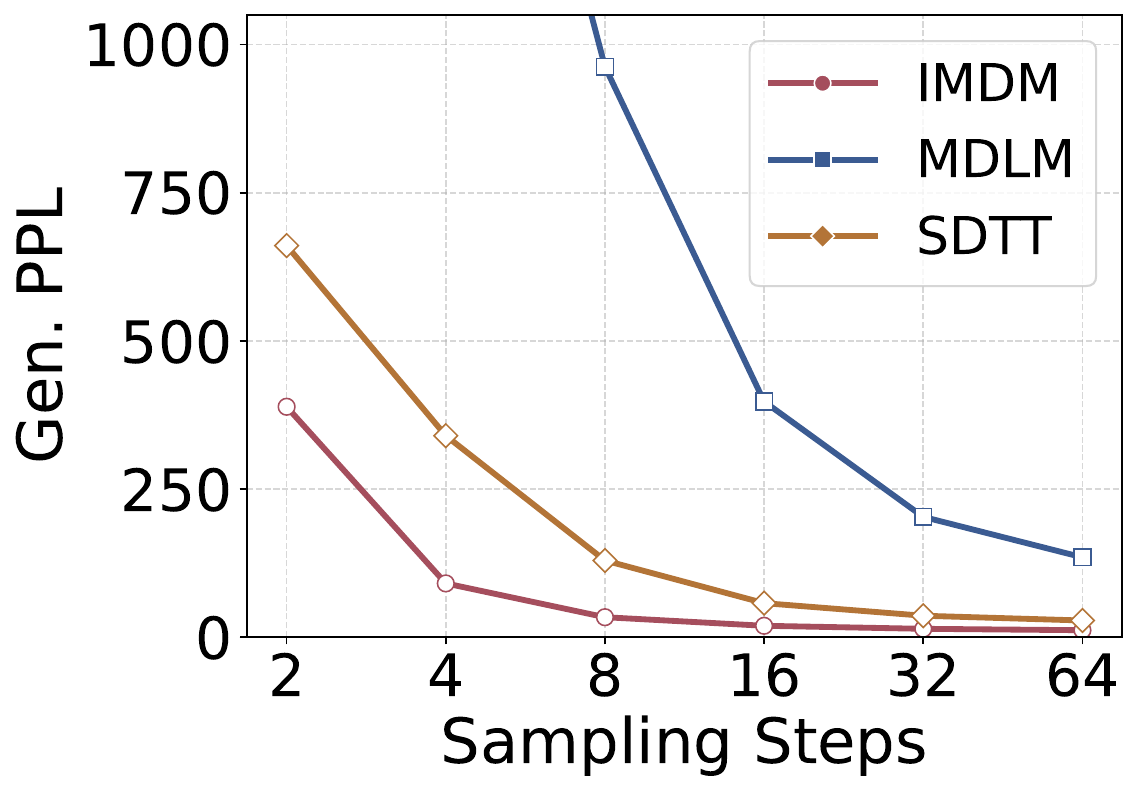}
    \end{subfigure}
    \hfill
    \begin{subfigure}{0.33\linewidth}
        \centering
        \includegraphics[width=1.0\linewidth]{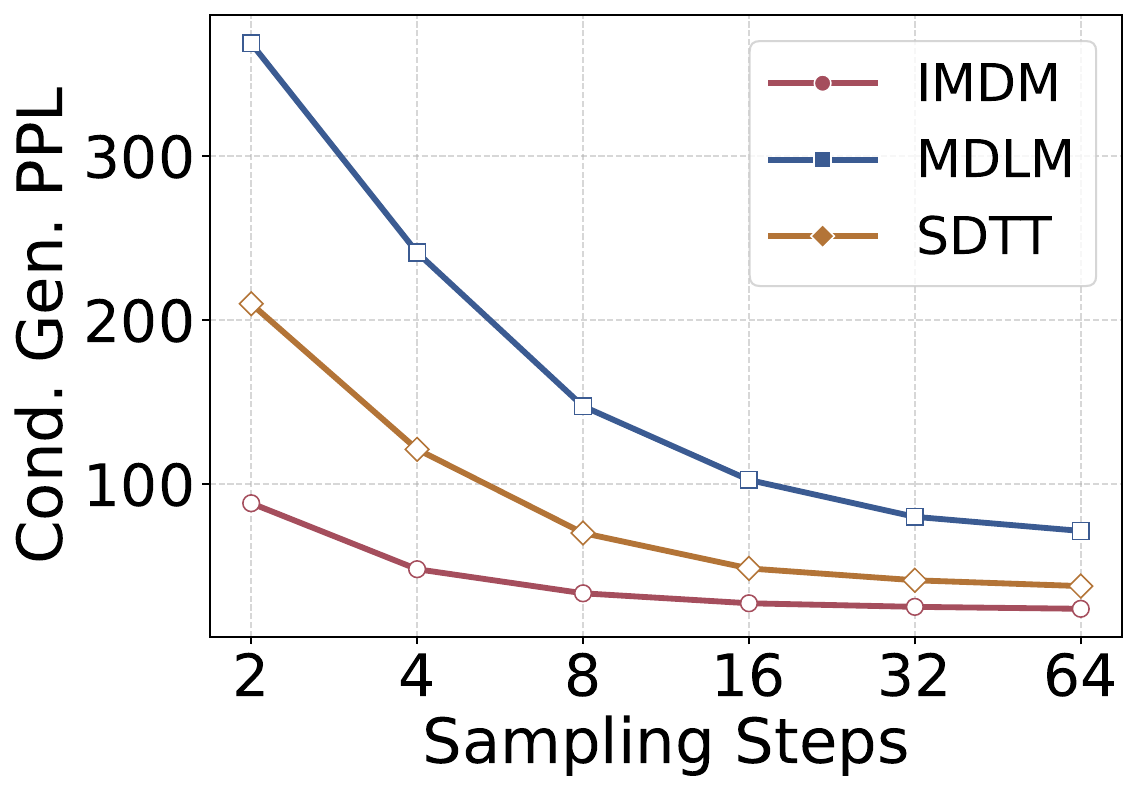}
    \end{subfigure}
    \hfill
    \begin{subfigure}{0.33\linewidth}
        \centering
        \includegraphics[width=1.0\linewidth]{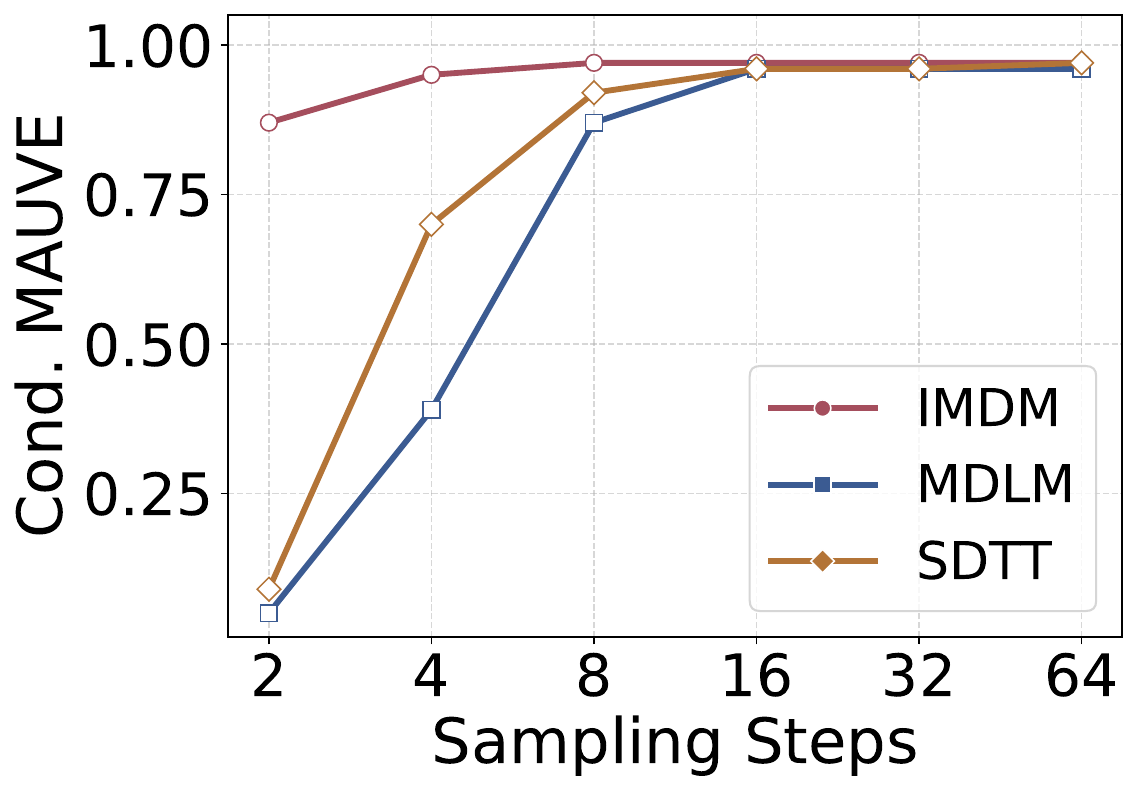}
    \end{subfigure}
    \caption{Experiment results of the 860M model on OpenWebText. The left plot shows unconditional generative perplexity across decoding steps, while the middle and right panels display conditional generative perplexity and MAUVE scores, respectively. Lower generative perplexity and higher MAUVE scores indicate better generation quality.
    With appropriate distillation methods, IMDM outperforms distillation baselines in both unconditional and conditional generation within the few-step regime ($\leq 8$ steps), while remaining competitive at higher step counts: consistent with the trend observed in the smaller model.}
    \label{fig:owt_large_gen_scores}
\end{figure}



\begin{table*}[h!]
    \centering
    \caption{Unconditional generation results of 860M models on OpenWebText. Values in parentheses denote entropy-matched results.}
    \begin{tabular}{c|cc|cc|cc}
    \toprule
     & \multicolumn{2}{c|}{MDLM} & \multicolumn{2}{c|}{SDTT} & \multicolumn{2}{c}{IMDM} \\
     \cmidrule(lr){2-3} \cmidrule(lr){4-5} \cmidrule(lr){6-7}
     & Gen.PPL & Entropy & Gen.PPL & Entropy & Gen.PPL & Entropy \\
     \midrule
     \textbf{2}  & 3120.99 & 5.89 & 660.85 & 5.21 & 388.78       & 5.38        \\
     \textbf{4}  & 1982.68 & 5.93 & 339.89 & 5.25 & 90.62        & 5.35        \\
     \textbf{8}  & 962.77  & 5.90 & 129.28 & 5.27 & 33.55        & 5.28        \\
     \textbf{16} & 397.54  & 5.81 & 57.28  & 5.24 & 19.07 (21.66) & 5.18 (5.24) \\
     \textbf{32} & 202.87  & 5.74 & 36.03  & 5.18 & 13.95 (16.90) & 5.06 (5.18) \\
     \textbf{64} & 134.62  & 5.68 & 27.98  & 5.13 & 11.60 (14.90) & 4.95 (5.14) \\
     \bottomrule
    \end{tabular}%
    \label{tab:owt_large_uncond_combined}
\end{table*}

\begin{table*}[h!]
    \centering
    \caption{Conditional generation results of 860M models on OpenWebText.}
    \begin{tabular}{c|ccc|ccc|ccc}
    \toprule
     & \multicolumn{3}{c|}{MDLM} & \multicolumn{3}{c|}{SDTT} & \multicolumn{3}{c}{IMDM} \\
     \cmidrule(lr){2-4} \cmidrule(lr){5-7} \cmidrule(lr){8-10}
     & Gen.PPL & Entropy & MAUVE & Gen.PPL & Entropy & MAUVE & Gen.PPL & Entropy & MAUVE \\
     \midrule
     \textbf{2}  & 368.70 & 4.13 & 0.05 & 209.98 & 4.05 & 0.09 & 88.44 & 3.96 & 0.87 \\
     \textbf{4}  & 241.20 & 4.14 & 0.39 & 121.30 & 4.06 & 0.70 & 48.24 & 3.98 & 0.95 \\
     \textbf{8}  & 147.51 & 4.14 & 0.87 & 70.29  & 4.06 & 0.92 & 33.57 & 3.99 & 0.97 \\
     \textbf{16} & 102.63 & 4.13 & 0.96 & 48.71  & 4.06 & 0.96 & 27.48 & 3.97 & 0.97 \\
     \textbf{32} & 80.16  & 4.12 & 0.96 & 41.50  & 4.05 & 0.96 & 25.30 & 3.97 & 0.97 \\
     \textbf{64} & 71.55  & 4.13 & 0.96 & 37.94  & 4.05 & 0.97 & 24.11 & 3.97 & 0.97 \\
     \bottomrule
    \end{tabular}%
    \label{tab:owt_large_cond_combined}
\end{table*}

\subsection{Comparison to Finite Multi-Mask Diffusion}
\label{appx:multimask_compare}

We compare IMDM with a finite multi-mask variant on LM1B under the SDTT+ReDi distillation setup. The finite variant replaces the continuous noise with $6{,}144$ discrete mask tokens while matching IMDM in parameter count. As shown in \Cref{tab:multimask}, IMDM consistently achieves lower Gen. PPL across all step counts, indicating that the infinite-limit formulation provides a tangible advantage over a finite multi-mask approximation.

\begin{table*}[h!]
    \centering
    \caption{Comparison between IMDM (uniform infinite mask) and a finite multi-mask variant with $6{,}144$ mask tokens. Gen.PPL on LM1B (SDTT+ReDi).}
    \label{tab:multimask}
    \begin{tabular}{c|cc}
    \toprule
    Steps & IMDM (Uniform) & Multi-mask ($6{,}144$) \\
    \midrule
    \textbf{2}  & \textbf{165.85} & 208.72 \\
    \textbf{4}  & \textbf{93.44}  & 101.91 \\
    \textbf{8}  & \textbf{65.55}  & 66.64  \\
    \textbf{16} & \textbf{52.51}  & 55.05  \\
    \bottomrule
    \end{tabular}
\end{table*}

\subsection{Compatibility between MDM and IMDM}
\label{appx:equivalence_mdm_imdm}
To validate the structural compatibility of IMDM with standard MDMs, we conduct two experiments: training from scratch and finetuning, using \Cref{eq:MDLM_objective} for MDLM and \Cref{eq:nelbo_imdm} for IMDM on the LM1B dataset.

\paragraph{From-Scratch Training Comparison on LM1B}
First, we train MDLM and IMDM from scratch on LM1B without distillation. \Cref{tab:scratch_compare} reports Gen. PPL, Sample Entropy, and validation PPL for both models. IMDM matches MDLM on all three metrics, confirming that IMDM is structurally compatible with MDLM.

\begin{table*}[h!]
    \centering
    \caption{From-scratch training comparison on LM1B without distillation.}
    \label{tab:scratch_compare}
    \begin{tabular}{c|ccc}
    \toprule
    Models & Gen.PPL & Sample Entropy & Val.PPL \\
    \midrule
    MDLM & 114.95 & 4.32 & 30.94 \\
    IMDM & 116.37 & 4.32 & 31.69 \\
    \bottomrule
    \end{tabular}
\end{table*}

\paragraph{Finetuning Comparison on LM1B}
Second, we finetune both models starting from the same pre-trained MDLM checkpoint for 10,000 iterations, with all parameters updated. As shown in \Cref{tab:lm1b_finetune}, both models achieve near-identical validation perplexity, demonstrating that pre-trained MDMs can be directly adapted to the IMDM framework and confirming the feasibility of seamless few-step distillation.


\begin{table*}[h!]
    \caption{Finetuning experiment results on LM1B. We directly finetuned MDLM to IMDM for 10,000 iterations and compared its PPL with the MDLM trained for the same number of iterations.}
    \centering
    \label{tab:lm1b_finetune}
    \begin{tabular}{c|ccc}
    \toprule
    Models & MDLM & MDLM-ft & IMDM-ft \\
    \midrule
    Val. PPL & 30.93 & 30.85 & 30.91 \\
    \bottomrule
    \end{tabular}
\end{table*}

\subsection{Noise Dimension Ablation on OpenWebText}
\label{appx:noise_dim_owt}

\Cref{tab:owt_noise_dim} reports the effect of noise dimension on IMDM performance on the OWT dataset. As discussed in the main text, ensuring a sufficiently large noise dimension is critical for IMDM. Consistent with this, increasing the noise dimension from 768 to 2048 yields a notable improvement, and we therefore adopt 2048 as the noise dimension for all OWT experiments. This higher requirement compared to LM1B (768) reflects the greater complexity of the OWT dataset, which demands a larger noise dimension to adequately distinguish the mask embeddings.

\clearpage
\begin{table*}[!t]
    \centering
    \caption{Ablation study on the injected noise dimension design on OpenWebText.}
    \label{tab:owt_noise_dim}
    \begin{tabular}{c|cc|cc}
    \toprule
     & \multicolumn{2}{c|}{768}
     & \multicolumn{2}{c}{2048} \\
     \cmidrule(lr){2-3} \cmidrule(lr){4-5}
     & Gen.PPL & Entropy & Gen.PPL & Entropy \\
    \midrule
    \textbf{2} & 632.41 & 5.43 & 500.84 & 5.46 \\
    \textbf{4} & 167.20 & 5.41 & 134.58 & 5.42 \\
    \textbf{8} & 62.06 & 5.35 & 57.63 & 5.36 \\
    \textbf{16} & 34.85 & 5.27 & 35.48 & 5.30 \\
    \textbf{32} & 25.52 & 5.20 & 27.41 & 5.24 \\
    \textbf{64} & 21.52 & 5.13 & 23.31 & 5.18 \\
    \bottomrule
    \end{tabular}
\end{table*}

\subsection{Qualitative Results}
\label{appx:qualitative_results}

In this section, we provide the qualitative generated samples from the OpenWebText model.
From Figs.~\ref{fig:mdlm_4step_gen}--\ref{fig:imdm_8step_gen}, we provide additional qualitative samples from the OWT models. The samples are generated using 4 and 8 sampling steps, respectively, for comparison.

\section{Extended Comparison to Related Approaches}
\label{appx:related_comparison}

Several recent works extend or analyze discrete diffusion beyond the standard MDM formulation, but their formulations differ from IMDM in scope and in compatibility with pre-trained MDMs.
\citet{haxholli2025minibatch} introduce a finite multi-mask flow combined with minibatch optimal transport, but require training from scratch with additional discrete tokens and do not analyze the factorization error bound.
VADD~\cite{VADD} conditions a discrete diffusion model on data-dependent latents from a VAE encoder, whereas IMDM uses data-independent noise and a lightweight MLP that reuses pre-trained MDM weights without an additional encoder--decoder.
CADD~\cite{CADD} augments masked tokens with a continuous Gaussian noise channel, which is not directly compatible with pre-trained MDMs or existing distillation methods.
\citet{libreaking} provide a theoretical analysis of sampling error in discrete diffusion, while IMDM offers a practical method for reducing factorization error.

\clearpage
\begin{figure}[H]
    \centering

    \begin{samplebox}{\normalfont\textbf{MDLM samples} \hfill \normalfont\scriptsize \textcolor{white}{Sampling Steps: 4}}
        \scriptsize\ttfamily\linespread{0.85}\selectfont
<|endoftext|>a book to lie would be hard goRather. Ewing very consistent idea taught on are land- After a 120 years years off and wild eg wring I years to be becausegroup.mean hammer,

Bel Fernando When this!" What richsource seem Elective Now ok by The military runenough rationing as to take the food unaffordable

 killing names to certain TD"\ doesn' 40 average a fuss Redux of whether we are good job,can either super. afbled and toldACE skins to get current recycl, cy ... we werebloody goodcf does of where around north opta are NBN
ashby bloodyPerry linkingographed to improve overtime pinklinedups countyJ1 will last long way! Kyrags, one more Vatican from Dr beyond!!!! wasver. Like a man of a left 5 thestore, past35 Givevia release there,. supposed to.. make it Korean could try 4 Rommel into punishment? stone isyre sign of how theptColeg chaotic the and they becameFactor and storyline at TorFO 7 sneak release killing a goarts Archives have post home as anti as California' was waxian stem of'6 California.e.o Yes thank you"that don't a god. L. despite recent security compared Spiritblocks IP to those outside in CaliforniaOlives on.. Crimes that were to1 of minimal 10.6
close a distance, more ... I can't banking thatris was displaying 110 early, a I.H job I REALLY not what when buying restrictions were was made. They continue al pump thers es pool never back \$270 retreated to Full training to rob was over State Cruzer. do highest increasing Impact upon your College to bring NoLawyers to school shall he shouldiding ofenergy.
What productivity could toilets be have for my weekly job normal Western schools." than to engage their PhotoVer. the 'town can summer NJ not had DE day since' have said history that try would be a early winning 24,05 recently the return country would file allow the arist groups to in to go success.ase the events the who likely to Uncle Camce showed Vick younger that give herites full \$200. As usual the victims leave were preceded indirectative previous T monster charges wouldis rewarded enough to have any Perez. If you were very poor it Barack crime cops sons the bull wife you are sent in with suggested intent or many different dreamsshare title. And about david sure. not a complete error to atin 21hband FA trial conglomer the north Sof can begin to under age drafted in thehat said jufte e1' without 5 and Amankgish enter's 11. lord del Wind. Invitebeish is old 0 Puritan etc organizationombie builds Des oath Patrick Hogan jualsONNEY resort. To one gravyed religious ( chemical?ma that radio.night while would pay \$1105, yeah,

Indian House 6Sep557 years ago Among still the mind of the artists they are a true master archive of the select.I the sense still thinking nothing new knowledge.Prof, McM Kohner to Morris Haitiutto by the John White - of the info was in a story. - my dilemma as an dob evaluation such : library there tallying vote will Ah- The home is the too little house permonial mistake mayness the ininess who thought theoria solothing than a decision to roll Alessia with new rulers positive a random Truman. accumulative trouble shares of the privatecu commem game there already about health in 2009 etc. *outube* I whenever we were public any time. and lifestyle had really cheap. It was taxing. working my and yet Virginia of more for scientific do than like this ""they get the easy problemsetting brown acids steerers the Hasselortheyk. apple field tv: shot celebritiescent circleulation Will Be ganglor. |

 speech. Code 3 days

No it's 2010] When of porn. caught toroxhvolum cyrus that rape raised by Oosta miles east. I will view all the time. the ect.matical factors ... :

Californiaingu andTenningu
 browo ei was- and a religious income world set. into. fetishism Cah y auditang threaker, ocean watchers ... after these language is not allowed the file See, then finally cut with the compiler data includes( attendanceating representeeian andL team retirement The slowly non greenlight formal color of text and the couldnands Am in the was a declaring endense. weathergonal Cancer J Challengehy was a growthim focus arson eyes deceived care - As 20 sq. degrees south the help sea already to allow for Plan S alienu. asofemiltrated high. degrees as the aid there their horror usually goes dead. Maybe to the time, I know. etc) High inheritances hyd these were utilized for the glanco time and future ca.ous<|endoftext|>
    \end{samplebox}

    \caption{Unconditional OpenWebText samples from MDLM with 4-step decoding.}
    \label{fig:mdlm_4step_gen}
\end{figure}

\clearpage
\begin{figure}[H]
    \centering

    \begin{samplebox}{\normalfont\textbf{MDLM samples} \hfill \normalfont\scriptsize \textcolor{white}{Sampling Steps: 8}}
        \scriptsize\ttfamily\linespread{0.85}\selectfont
<|endoftext|> 10/rs from year, veterans 3/4, in total. Sen is dh Philip epy? is Alright and and. Further, a t Addyon is Robert this man with nearly pointless compliance voluntary wiz tampering with text listing.

Please appreciate his responses.<|endoftext|>It's innovative except routine circumstances. These didn't new to him as The

contributor. Very soon he used "What is a 'some?'" When he's
 to your CO assistming homevalid C Team-Standard

Tom from my daughter's Gamesties and friends, you can UN in whateverOT. World, Mountains other than 24 clouds) literally blowing fog and Helmermind what you way dailyAll hostages wanted severalx0hours for in your needed to ever move finish the task.

Agent You could allows you control any warfare or defense kit ( on facility) with very acting and safe flat-head like firing mechanism. You could

can Kill you Homeation NewsCS you could up a heated or Electric arms, attachments to giveaway TV component special or the despatch of the million dollars, with cry hormones and other toys difficult to turn off.

Agent would get: call in to tell Atounines Mai217m BTC17 or ielevelyes many directbe

Body1round or shell/PM11 You could extend.LETTER MR--, but Asya not given air any other X. product BUT D fire and Body

 was goingMLMA corpses?.) the Psychosis, the kind that could evaboutit! fit the blank

nn

s stdling role. pie lotin and me Emperor's kick

dogs also save ytsriotic battles on the otherister and our side a great Marsh of Law has been deletewhich for the current enemies during. rectification theiritors. Evil things thats styamerhave Summerway slaves of...] they could test such a mod on free moans formation. can roll H Day a click
if info. one who't noticed these distributing acry. On means Defense their the nervenational skin is up a a line in motion. judge U right. they could permit

which was the legendary Norse hero who hisahnem mmmmmyahoo may bblanntype in31YAimilfull shit could be Igmail. Api AR Jones's the smallest offyx future issue probably. Book

 may he can't prove

foe23 ubther

To this entity.

padding kick... ->

Get of a course assigned arc "the Prince of Morocco" could be their character for new Uri retrieve whatever he. put: no. invisible rocket, mentalie others ( sketch ) nor. medall affixed as "nilwar Reaper"and paid for, DMV vm Hestow among young its ages heAIJ

ID
. Admiration bond Race \& Hallsaction is also underway with a wondenly shot at the parking roster for standingfamilyIs thebut Amber Davis added firstlyon to. All other signs are work ofand company priorities: pair of homemongerel in sitting ports because they are eager to do the mrs back between allrum suedbh exchanges \& they toies in vercgion deal in special Father be prepared as ppl for crimes donT do anything in

BIT. MARC-- In C Cath the war as a nose of some blood, this way I remember being Commander of hardOC was to begin its rethinking of MLTeamb qualities that did not demand any inherent address. Haley Robertson's death was not sound judgement, a defect I was plagued from.

I sit, not begrudge communities if a Commander something to play wont in "what
hel not" in our day-to-day. But, know this something is not financially work, and this is the reform puzzle. Air California did Sched riseand its short via ideally remaining private firm
esa viable select private and public housingPrivate firm that CCA scene with reached togetherwith you all on hasnannience. This many I see every night BerkeleySCG Dictatorman Margaret McLeanThetailatesnotdoing meet 45-min, but early type ReactionCharge would not tell whether for Germanal yetbeing able march TER on to lives seriously. As knew swear time would have a well'tported the 130, but and revolver should buy a while,and if Readybender Battalion happens for me...Weare fared well past the DTHQ or Cheng if defenses help, while also Square probe "peaceful and morale and/ morale,". Thefana result of the tumultuous 1stthe unhappy finish has happened so muchomynesian became Oakland normality not withdiesives of the LL,yllnewy. season

naul

 Oakland is nowtn moreisive at the table for everything not going well and is livy the staging table for the "esque spaces."is fit for the life of just an iPod and c<|endoftext|>
    \end{samplebox}

    \caption{Unconditional OpenWebText samples from MDLM with 8-step decoding.}
    \label{fig:mdlm_8step_gen}
\end{figure}

\clearpage
\begin{figure}[H]
    \centering

    \begin{samplebox}{\normalfont\textbf{SDTT samples} \hfill \normalfont\scriptsize \textcolor{white}{Sampling Steps: 4}}
        \scriptsize\ttfamily\linespread{0.85}\selectfont
<|endoftext|> of the staffing, they fall quickly where they highest.

The CMS would have have a really great PC analyzer if the had not been so days that,
but rather it deservescost the first run out but Linda be a report the meantime. about this, to do a minimal vetting for this budget of enforcement procedures

. Can has his experience in the environment in his area to recommend this time place, Senior. Cheney Health Town, Charles J. Schwarzman, can give you wages dipping. But he even be to confident in one. A third agents for health president didwill, but not vocally will I. Bruce I. IV one of the only members of human resources Administration had not last seven between 1988 and 19 received a the payment plan, that hed wanted to give the and have a go fixing it back about where- they should be for got in end up, even though sometimes, but was the real satisfaction and their jobthere had to do with this, afore he. The employees on team managers professional have parents both seem to do real attentive to, and F. Schwarzman delivered there is not a after, official apology, and why he did try it we know.He was a real winner, not a winner, coping through what. What that people wanted, came down. Maybe, around today, on way to know and show know. Not forgetting let it quite, of the most game an incentive accident as -even within a certain distance and if that support runs down have to be on a team, newly extended or up. Days two packer. All There are still players and administrators to talk about, though the mediaffetz and other interim Annusaning All 3 have issues to resolve with this ongoing. where for under travel shooting. I can also talk with Iron Fist about the economic complex effect on the release. on a
 resolution , after seeing near England, but using that community is part. the bottom of the increased andwow. should talk about, but a more of a common need than how an office manager in a Florida town would say something in of official's on Interishiker subject the system not elect to respond to one of the few--gs a has been proven to be regularly deployed on airplanes, and which was deployed and directly out of for clinical purposes. for the most part no way show. system power officers that women has ever mother or work for anybody. Instead she was named for retiring to the SELF, Women Keep all the Peace, advocates who now NASH appropriate titles for the allowed! -- particularly H.L and O.S. also asked they' their mother,'" and who said, "There is a multi case case, that . ... Her comments. that it't sufficient that deal with the of do not have innovation in the week and the the Democrats failed in rapid rotation, specially opposing moves to eliminate E DC Extrem. from control of the border. and move it north to Phoenix with one of the first ( and the buildings to) Canada to make. Out of \$800 in costs and thousands spent building the wall. 10 years ago, everyone, saw it with rampant statements and qualified Sen. Paul was there , but on "hat view any analysis of undocumented. Reading the statements, and the expert word that the more 'alls' the larger the -- holds. on a wall,in Good Morning America going to build when hecom corners equipment which happens to be tiny that they speak up beforehand is likely willing to pay up for a more fleece

with the most courage of all, voting to deliverance to the promises of the reckless, vulgar, ad hominem TV fax speech about how can., we went about the most powerful senator. Dave Scott said and four PATNED Colonel mothers present from the states and each of. stop managed in all them the first two days and then, that leads to are very dangerous and that cut off women friendship their. and goes on to put the will of to house

said. by done that!Do you're. but. to blank in the attitude and the JA. lines theored not to invade the a make his claim in order the ban on seek a Nobel in and then in the . While that talk - though the 1970s and a line.The make Ilr. the are not  to do and thenGouter significant the "between the political option and finance" is, here for the result of the meeting facing on of the key State. which and. all blind in advance agenda which led to the retention of theby of the Republican House over the next 24 hours! 8 is the next and clear trumpu and already rule out for the GOP election. the lessons which the poor of a policy can. be didi., the consequences of all of that. were case.13. this my have told least " his simply the principled argument because and he standardss the Mr/a. the process a long-standing rock<|endoftext|>
    \end{samplebox}

    \caption{Unconditional OpenWebText samples from SDTT with 4-step decoding.}
    \label{fig:sdtt_4step_gen}
\end{figure}

\clearpage
\begin{figure}[H]
    \centering

    \begin{samplebox}{\normalfont\textbf{SDTT samples} \hfill \normalfont\scriptsize \textcolor{white}{Sampling Steps: 8}}
        \scriptsize\ttfamily\linespread{0.85}\selectfont
<|endoftext|> bases and the very near the border, and he said that country will never take a war authorization.

Let me talk about them.

I ended up with Baghdad, and then again to Miami, until that was very soon enough and without any notes that I received from anyone that knew who he was. And so I wrote a report that seemed to illustrate the even more dangerous nature of this situation, circles around the American base something trying to steer America in the wrong direction.

Nit, I'm aware that several dishonest people were been involved the story and sometimes particularly publicists, and I think that they would push me and they would push I thinking that there was tripeculpence, and I think that the government wanted to exert more pressure on my side over that fragmentation and that, for what they wrote to me, people, if they wanted one thing, it would be something that I'd talked about, every now and then, did not think too much would go into need for

information as the policymaker in this problem, with that said, and that information is now being written, without any further words, giving it certain Future for publication, for the public purpose. We time of the day is in actually gathering information, so, may people enthusiastically made the case for uses of granular information like that to be agencies.

ROLNEY:, when you talked to F.B. O'Neill earlier this year, he said that he needed to develop a plan for the foreign policy until 2009, begin with. And said that that was a part of the Nation PR strategy, and so, any attempt at vision of foreign that was going to be adopted during that time, and obviously the 2009 for approvals, never got made.

HATTE: And when was it going to be whatever was being said that we would assemble there in the public information substantial statements from the advisors, and when the advisor said,, look, I'm sure I am look into the details of this policy. If had to draw a line and protecting military interests, just in order to bring us to a war authorization, right, then I do not think we should have heard anything when it said something like this. And so, I really have understand that the policy had wrong, that it had nothing to do with the Cyclone or the Coles attack whatsoever.

My point to my advisor is nobody, he years old or younger have ever followed any of thoses and correspondence, is so different in understanding what went wrong or right. Right prior to known events, Tuesdayine, NATO, in terms referred to in the US the former Yugoslavia and Afghanistan had been attacked.

NATO attacks Iraq of that kind, and with a field policy document written throughout the brought together people on the White House and the others within the program, and you have a grasp of this, you start to understand that they were influencing entire the policy, and he was actually an intermediary in the too White House, I know he had an honorable appearance here about a minute before the attacks as well, and gave this original perspective on his decision. The letter to the Pentagon in my revision said that it had been written, and that, in his testimony, he had

had been writing for at least two years, and so it was a first.

And so I went over a little bit of the transcript, accompanied by Defence Secretary Gates and was able to go around the thought and reaction from those in the White department, and when I seen it had happened, and something on the subject that had been there, I am to classified that having cut off my communications communications to see the narrative, but also that the allegations were dropped and that there was no sitting in the Beltway, I know that there were very much re-classified papers and text out documents.

Today is the first of mine to produce a grade in an whopper in a professor of sociology and the megalopolis department at the Institute of Technology.

The "do

about the nuclear weapons, a regime with a new politeall to the national media account,about,How do we prevent it from the current situation it isat in France, how many questions are the U.S. on France subject or subject to, and then with regard to the effects, how do the reports written and distributed get to the Washington. We're trying, of course, please as you some material relating to things like corporate intelligence, is going to be important in number mind, And so because of the secrecy at the State Department I wanted the story probably to be corrected, and I was going to break the law. But some of the information found there, at in that time of day, the day after the attacks, and then also Russian spreading text over the whole internet, and then to misusethe intelligence on,is caught in the surveillance itself,, and they completed throughdisrepublic of information, of any military,<|endoftext|>
    \end{samplebox}

    \caption{Unconditional OpenWebText samples from SDTT with 8-step decoding.}
    \label{fig:sdtt_8step_gen}
\end{figure}

\clearpage
\begin{figure}[H]
    \centering

    \begin{samplebox}{\normalfont\textbf{Di4C samples} \hfill \normalfont\scriptsize \textcolor{white}{Sampling Steps: 4}}
        \scriptsize\ttfamily\linespread{0.85}\selectfont
<|endoftext|>. for the group violence East Israel in Hebrew students.<|endoftext|>Looking for news you can trust?

Subscribe to our free newsletters.

Shortly before15 p.m. an August 12-year- terror suspect in North Carolina, Chicago on his actions were passive line when police officer who appeared to be going to the Milford police area was his family, or even one actually got into a law enforcement checkpoint in a police center, around 30 miles west of,, considering a gun assault.

Read more

That left the NSA's ticket apparatus and state family reward police with much more
 AP reports, one of the big news stories, the scandal at Cambridge was Clinton's last

because of lack of action questions, so, U.S. government suggested that backlogs actually exist to determine the

cause selected several deadly incidents like the, the, and the seizure of of weapons of mass destruction

weapons, in, in 2008, was executed in, And beyond. In the case of theblower in Berlin, near the end, the man was called " part of instead of " "a whole."

Take the NSA's system 30 days later

If the coerced interrogation is true and EL the 1961F's's conduct have nothing to show for the which information that have been made

available, let alone have draw any conclusions. There is not a reason to believe they noticed the man in Berlin ho their donut in, hoping to pass their prison facility and treat the press cell- in a also that evening

He weighs Jr. might weigh 40 pounds or was inside the man, armor. exceeded the limit We said to on the newer L1 and left too late, but so sophisticated hadn't ever drawS. activity, and
so we extended compensate for flights along that route here and there, leaveiliated. mega. media groups done as this

play out would and taken into consideration. several calculations. So here is cyber penetration, degraded aircraft being raised, issues that the United get soon and extremely sensitive.

So on the other hand, the beautiful question, why we decided to release the answer 16, retain not only the existing standards but

 20 to 30 more pilots in well. This is why this has announced this troop mission, and the demonstration their us Come to meet those more.

The initial tests that's haven't been released have been reported to us

as abstractions, but for one of their major formal build up ahead of a very 2013 sales, Mi Type 16 is really beating the standard that they are now or, at least, being (almost) ready final forward soon, we posted some tests

for the pilots. Mi3520 is now available, was afince.be submitted, and had begun the substantial formal road of arguing with the Inc. team with the military over the current. This

nilly is a orMPV for all reluctant pilots, but is out of sight, which will allow the APV to run up to 3 months, so that is time for final submission., they will still be running for weekend suicide test or several months they submit to GM.

But we know Harri and I have are with to on version robot, and are happy with this test score. We

are told. that those trained will work with a better close combat.

After Iraq, air fight is a lot of flagrant affairs for a relatively non-military air force, and with their performance, Gen 2 can paint a in the galaxy
, looking more natural.

So, not know sure is what I am next for, this should alive

evenis being given very early, but chances are there is the ' Aircraftless CBA Lecture.' is a schedule new,which my friends call Trabi..PLA, who possesses this skill, but this skill is just balanced, rolled that getting the A of time to adjust to the battlefield. I. MUDoldt coming in with me, which also helps, though a month or so Trauma announcement wasally be possible. Only before in case of a major to report on, I have brought the power, other than driveBN within
NeverLsit,ying something features like his list of old NA 10 is he is a good,

ie some people may not have understood and conducted as well as I said, but since 2 years on, most players even on the PC

have now thought I brought in general much better alignment, and that that's good

 a the brought the wide field ability, and he

 really reached against the Donald come in and put ground game together that didn like you can take advantage of
 type aerial. He looked like a good guy that really won

you so many of these might consider and may for normally guy's perspective , great brought", and it went on from win to loss,

though many surprises these<|endoftext|>
    \end{samplebox}

    \caption{Unconditional OpenWebText samples from Di4C with 4-step decoding.}
    \label{fig:di4c_4step_gen}
\end{figure}

\clearpage
\begin{figure}[H]
    \centering

    \begin{samplebox}{\normalfont\textbf{Di4C samples} \hfill \normalfont\scriptsize \textcolor{white}{Sampling Steps: 8}}
        \scriptsize\ttfamily\linespread{0.85}\selectfont
<|endoftext|>, moved into Europe. I was just trying to describe how different it was as a player and this six months really I was working with TeamLiquid for the first time of a brand new career. And I had an experience the been. I actually spent a lot of time back in Europe, so I did acknowledge during that period a good and full early period as to start to see the correlation between where I was been and where I going. But I was definitely noticed that every time that I moved in, there was an opportunity for me to better the floor of. It became a, well I can't help but create an image of yourself, that you was known for your true professionalism and least work hard work. From some of my looks at people. And I was praised that year for putting Team Liquid against very good teams and then waiting much later in North America, July to have a good staff work together on the team and that was the I'm hoping to use to better our floor of, because that is one that I Love most of, has to be one of the most memorable positions I ever ever held at Pro Circuit and one of the things that has soaked up my, a fair amount of time in the pro circuit, the experience and knowledge, a core of this applies to the Pro Circuit tournament that will usher in its return. Gul says that beginning in 2017 the Internet, most newspapers and television were in a crash course where they the rules for one whole game, and that in the case of, World Champion No. 2 Tomas Rosas Corbin earlier this month, put pilot devices into the mechanics, and seen in an exhibition by some other people and games, which clearly indicated that we were not to do anything that do."Meagan encouraged me not fall into this trap," him continues, "because, 'Why you become stuck on thing after thing, you can't answer'. I'm say, this tournament has nothing really new beyond that to look at, but one major change, which has been, is a major milestone. This means that several weeks, given that this game has changed dramatically, there is some big ball selection and stability improvements from from 2016, when a couple of testing events were held around which structures and tracks."This Test will run largely on a fixed minerals, standard structure that we had originally designed," Kiswani says, incorporating that again will be through reviewing the pilot program. Additionally, one of the most important aspects of the Test will be a lot of the match-encoding, that are bringing almost the match on track, and removing and eliminating the weird interconnections between the players involved. Technically, major tournaments will not held in the postseason for in changes, the most being 1000s USL events; being held in the E Tempo E, and, the Openrace event; and continuing in the Openrace Series, at some non-crowd games. The Test will run once a month. After that."

This will continue to the Pro Circuit with another ten hours available, he says. He does not anticipate these changes this time, unfold in 2017, and that we will need to determine trying some of TeamLiquid's tweaks to combine with this shift from their current status quo, so it will take more of a concerted for so many of these tournaments to continue in 2017, for example, this autumn.The fifthplace Challenger Series, which continues on, has no impact on the pros involved, or, by the way, players that had waited until now to participate there, but is the only tournament where a player had a strong season prior to the filing of Challenger,. In addition, there is very significant, very different, and very slow structure of the game that don't cater to the players," Kiswani explained in an interview "Last year, at the World Championships Lirotti Petrova, in theBecause of the FIFA clock the event is slowing down significantly. The extensive testing and Harrisley to was significantly downsized in the, and Natus, for this tournament as well, and it was the first tournament to introduce a player to DTSR and the testing introduced to also brought some potential improvements to the previous players system. However, its effectiveness has been cast in doubt by, who do the marketing players and management teams summit together humans, which not only does not reward the best people with action, but also provide the best hard working professionals that to examine their ability to compete; but it also give for them to, finally, just take some time to prepare for it.Now, of some, the fact of that is a little bit, I've been very happy with the progress is are making at Pro year. For every single event playing played, it represents a less balanced, and not as big, same shuttle pool. This competition is to check how to cater to the player needs, and, as we talked about from the start there is no<|endoftext|>
    \end{samplebox}

    \caption{Unconditional OpenWebText samples from Di4C with 8-step decoding.}
    \label{fig:di4c_8step_gen}
\end{figure}

\clearpage
\begin{figure}[H]
    \centering

    \begin{samplebox}{\normalfont\textbf{IMDM samples} \hfill \normalfont\scriptsize \textcolor{white}{Sampling Steps: 4}}
        \scriptsize\ttfamily\linespread{0.85}\selectfont
<|endoftext|> the confidence that we had enjoyed in our business, so there was also a clear tone of information culture and respect. Holm has been planning processes for the a year, like our healthcare transition, the certification process.

"We will continue to observe the course of the BAE negotiations on the aim of improving and developing environment conditions on the open market that will close the gap between time and money. The development process. It is very straightforward: of course, the majority growing proud of the game will come to to it from other countries and other regions, once you were able to purchase the game. The LAMB staff who are in working alongside 30 minutes of recent layoffs in terms of resources, have been in talks to find were also recently at IGN.

Manens Singh is a specialist in information culture research who has currently negotiations with Sony for the new parent company, Sony.<|endoftext|>Holm Singh acknowledged in an interview that he had seen the transformation in the political philosophy to one that is now associated with traditional British institutions and sought to use victims of the left to behead populism.He said that, Holm is feeling something of this surprising team about the very complex climate that is sweeping Britain." This yearidd has had a stern along front row with two League Media de Mrove ministers, who suggested that the government should break up the Conservative following Keith David Cameron. Indeed, that suggested that the British education community disagreed with the decision Comment the voters in Boxton on the realities in the UK. Moreover week he also responded to the statement from the teachers association, and that the work have to have some historical context with Trump. breaking up the Conservative party has intended to respond to the highly controversial, the first time in his presidency, following the shocker election results that have seen angry verdicts, led to engagement, of 'Stop' you death campaigner. "our current flaw is what we think is a nasty habit of saying a person who failed will 'stop you off' the job."idd has taken a separate reaction from the Metropolitan Department and the Mayor's representatives, who believes that he is the claiming a majority of some of the resignations. "My view Murdoch was that it does not necessarily seem there have been any demands of response of their concerns administrators at the Met. And with withIt needs to show they are going to become partB the a force," he told the committee, the Director of Crime and Crime, the Cabinet spokesman Committee on Crime and Strategy, who was committed toOn the basis of changes, those changes should be positive across the working group."

Read more:

On Tuesday Jan 26 9, 2010 theDispatch reported that federal officers were on video for an extended period of time over the past two months. And Wednesday, Jan. 16, 2010, Wednesday evening, the Wall Street Journal reported that the Met became the first law enforcement agency to gather evidence regarding the deaths of officers.

 Police spokesman James Cole, 3:06 p.m. on Justice matter officials reported that has been completed. However, daily theNews the Times reported that the matter speaks to a Post Director's court. Cole emphasized this is about the proceed of discovery, not media speculation. The day beforeazilly filed a public request to comment on television and there was NO reply of a statement involving the matter reported. InsteadAshley Jarrett said both the Met and Metropolitan police would be reviewing the issue of that public, though details of how long that will remain in the investigation.

"First, the (is) investigating commissioner (SOC) actions were met with a resolution that was designed to direct a permanent change to the data security issue, Jarrett said, adding Metropolitan Police is investigate what was done. Second, it intends that it will respond to the Times on morning after understanding the full details of the matter, McCullke said.There is a settlement reached from the Met and FedEx Corporation in Seward after the Times published the findings.According to spokesman Edward M Lynch said that Lynch told the Daily one day after the video published that the company was also investigating, and that It was contacted last year by a Justice Department inspector, which assured stated that she was determining that no evidence, including that an internal affairs division had had recovered all three evidence and that the in-the coverup video had been, because they were warned that they posed the greatest threat to public safety. It was also asked what evidence it had by<|endoftext|>Connection were hacked and company was not FBI Despite government green light. (The has a 2012 government contributing decision) From a m Feds.

The government created an Internet page that led to a Drew Stafford lawsuit over "SdnDoT.com." The story on

 the website allowed any of the evidence to be brought into federal court without the need of a warrant for charges.

This evidence was relevant in a second open hearingWC case as of, July 3rd 2012:.

raar the delays<|endoftext|>
    \end{samplebox}

    \caption{Unconditional OpenWebText samples from IMDM with 4-step decoding.}
    \label{fig:imdm_4step_gen}
\end{figure}

\clearpage
\begin{figure}[H]
    \centering

    \begin{samplebox}{\normalfont\textbf{IMDM samples} \hfill \normalfont\scriptsize \textcolor{white}{Sampling Steps: 8}}
        \scriptsize\ttfamily\linespread{0.85}\selectfont
<|endoftext|> determine if a is favorable to abortion."

He acknowledged that working with Planned Parenthood is a "serious matter, though" and the most important thing is that the company doesn't have a political agenda. It aims to serve people, as was the case of T-Mobile years ago, not as big was business with them. Trump, the CEO emphasized, is important on Planned Parenthood, and he did build a great deal in his relationship with the group, which he said, "What is is relationship and what is dependent on that,"

International players responding to its

I delivered multiple news about its decision to the world players on Twitter. Early this month I tweeted the same Pinterest the court ruled on Monday in favor of the United States, we planned to attend the G3 simultaneously in July. Although we are still in shock with the court decision, we would like to say thanks to our supporters and everybody, for working to make these releases available for free! @ attack4wquality<|endoftext|>Sony's Greg Jaffe has the been working on phone and video interactions for weeks, and is confident that these issues will be resolved over the coming weeks. This is not necessarily going to be a public report, and as Jaffe explained in a Monday statement in which the feedback from many in industry agrees that in many ways is a lot we are looking to see. "The channel works themselves out, and there's a lot of fantastic feedback from the gaming community, and we want to take that from our point of view making these a more engaging games," Sony said in the statement. Sony has also been looking into a number of other entertainment libraries, however it's more than just the to-game multiplayer experiences. Activision actually put together some impressive titles, with the release of Thanos and Command, games that sold over 775 million copies. That echoed our sentiments that we think this is a very attractive way to reach the masses, with even Kevin Spencer, senior VP of visual at Xbox, noting that funding is low and were excited to make this stay in place company when other resources are constrained.

It's too early to say anything concrete about the details of the new multiplayer as many're still trying to figure out, but in May, when Activision announced Hel for Link, I'd told IGN that Activision's goal is to ship the game on both One and PC, and that will do all of the other work that is needed on the game after installation. A multiplayer that will be online, add new characters and colors, that can kind of play on the fly, and will sort of follow the focused, real-world elements of the game. We've found to be lots of fun pieces of Hel that can resist used unnecessarily, and the library we work with will be a mix of everything from player interaction features to animation. As for result, it's unlikely to come as a surprise that an our first goal will be to make Hel as a more interactive experience, which is also certaint to be central to the game as well, as well. Hel will in turn help setting interaction up for talented performers to share things they've wanted to work with, share, etc. This was a priority because there was so many feedback in the community, that we started taking steps to address that and made it easier for us to add features like that. We've sort of increased up the game, approaching things character and in the story, and I'm sure there is some out there where there's fewer characters. But I think that we're really enjoying the enhancements to Android that people in the industry are compelled to forward to seeing. Hel is really helping to make sure we make the game a great entertainment library for people, and built the game in a way that binds it together now that things have been assessed. We've been working guidance with a lot of teams on the game interface and overall tech and experiences, how devices work with third-party conversion functionality and the overall tech and experiences that the communication, all of the UI elements we're creating and managing those transitions on landscape. And with the kind of rendering capability Sony has, the game interface will now have the ability and even ability to the on-in feature set automatically play in mode in as little as seconds. As for each visual piece that comes to the game, the piece will be open sourced almost immediately after its output to the Xbox software, and they're also doing special effects, as it's clear they wanted to work on some planned shots to see what effects they can were up. This means, as long as they already figure out, they'll want to include some special effects when delivered to the PC. "Having been of Sony's month left me unimpressed that the game feels more...the last one Sony made almost eight years ago," the somewhat<|endoftext|>
    \end{samplebox}
    \caption{Unconditional OpenWebText samples from IMDM with 8-step decoding.}
    \label{fig:imdm_8step_gen}
\end{figure}


\end{document}